\def\eqref#1{equation~\ref{#1}}
\def\Eqref#1{Equation~\ref{#1}}
\def\1{\bm{1}}
\DeclareMathAlphabet{\mathsfit}{\encodingdefault}{\sfdefault}{m}{sl}
\SetMathAlphabet{\mathsfit}{bold}{\encodingdefault}{\sfdefault}{bx}{n}
\definecolor{forestgreen}{RGB}{34,139,34}
\newcolumntype{P}[1]{>{\centering\arraybackslash}p{#1}}
\newcolumntype{M}[1]{>{\centering\arraybackslash}m{#1}}
\newtheorem{assumption}{Assumption}
\newtheorem{theorem}{Theorem}
\newtheorem{lemma}{Lemma}
\title{Advantage Alignment Algorithms}
\author{%
  Juan Duque*, Milad Aghajohari*,\  Tim Cooijmans, Razvan Ciuca, Tianyu Zhang, \\
  \textbf{Gauthier Gidel}, \textbf{Aaron Courville}  \\
  University of Montreal \& Mila \\
  \texttt{firstname.lastname}@{umontreal.ca} \\
}
\begin{document}

\maketitle

\begin{abstract}
    Artificially intelligent agents are increasingly being integrated into human decision-making: from large language model (LLM) assistants to autonomous vehicles. These systems often optimize their individual objective, leading to conflicts, particularly in general-sum games where naive reinforcement learning agents empirically converge to Pareto-suboptimal Nash equilibria. To address this issue, opponent shaping has emerged as a paradigm for finding socially beneficial equilibria in general-sum games. In this work, we introduce Advantage Alignment, a family of algorithms derived from first principles that perform opponent shaping efficiently and intuitively. We achieve this by aligning the advantages of interacting agents, increasing the probability of mutually beneficial actions when their interaction has been positive. We prove that existing opponent shaping methods implicitly perform Advantage Alignment. Compared to these methods, Advantage Alignment simplifies the mathematical formulation of opponent shaping, reduces the computational burden and extends to continuous action domains. We demonstrate the effectiveness of our algorithms across a range of social dilemmas, achieving state-of-the-art cooperation and robustness against exploitation.
\end{abstract}

\section{Introduction}
Recent advancements in artificial intelligence, such as language models like GPT \citep{radford2018improving}, image synthesis with diffusion models \citep{ho2020denoising}, and generalist agents like Gato \citep{reed2022generalist}, suggest a future where AI systems seamlessly integrate into everyday human decision-making. While these systems often optimize for the goals of their individual users, this can lead to conflicts, especially in tasks that involve both cooperative and competitive elements. \textit{Social dilemmas}, as introduced by \citet{rapoport1965prisoner}, describe scenarios where agents acting selfishly achieve worse outcomes than if they had cooperated. A global example is the climate change problem, where individual and national interests in economic growth often clash with the need for collective action to reduce carbon emissions and mitigate environmental degradation. The challenges we have faced in tackling this problem highlight the complexity of aligning individual interests with collective well-being.

As artificially intelligent systems become ubiquitous, there is a pressing need to develop methods that enable agents to autonomously align their interests with one another. Despite this, the deep reinforcement learning community has traditionally focused on fully cooperative or fully competitive settings, often neglecting the nuances of social dilemmas. \citet{sandholm1996multiagent} empirically demonstrated that naive reinforcement learning algorithms tend to converge to the worst Pareto suboptimal Nash equilibria of Always Defect in social dilemmas like the Iterated Prisoner's Dilemma (IPD). \cite{foerster2018learning}, demonstrated that the same is true for policy gradient methods, and introduced \textit{opponent shaping} (LOLA) to address this gap.

LOLA is an opponent shaping algorithm that influences the behavior of other agents by assuming they are naive learners and taking gradients with respect to simulated parameter updates. Following this approach, other opponent shaping algorithms that compute gradients with respect to simulated parameter updates have shown success in partially competitive tasks, including SOS \citep{letcher2021stable}, COLA \citep{willi2022cola}, and POLA \citep{zhao2022proximal}. More recently, LOQA \citep{aghajohari2024loqa} proposed an alternative form of opponent shaping by assuming control over the value function of other agents via REINFORCE estimators \citep{williams1992simple}. This new approach to opponent shaping offers significant computational advantages over previous methods and lays the foundation for our work. 

We introduce \textit{Advantage Alignment}, a family of algorithms designed to shape rational opponents by aligning their advantages when their historic interactions have been positive. We make two key assumptions about reinforcement learning agents: (1) they aim to maximize their own expected return, and (2) they take actions proportionally to this expected return. Under these assumptions, we demonstrate that opponent shaping reduces to aligning the advantages of different players and increasing the log probability of an action proportionally to their alignment. We show that this mechanism lies at the heart of existing opponent shaping algorithms, including LOLA and LOQA. By distilling this objective, Advantage Alignment agents can shape opponents without relying on imagined parameter updates (as in LOLA and SOS) or stochastic gradient estimation that relies on automatic differentiation introduced in DiCE \citep{foerster2018dice} (as in POLA, COLA, and LOQA). Furthermore, we demonstrate that Advantage Alignment preserves Nash Equilibria, ensuring that our algorithms maintain stable strategic outcomes.

We also introduce \textit{Proximal Advantage Alignment}, which formulates Advantage Alignment as a modification to the advantage function used in policy gradient updates. By integrating this modified advantage into the Proximal Policy Optimization (PPO) \citep{schulman2017proximal} surrogate objective, we develop a scalable and efficient opponent shaping algorithm suitable for more complex environments. To identify and overcome challenges that arise from scale—which are often overlooked in simpler settings like the Iterated Prisoner's Dilemma \citep{rapoport1965prisoner} and the Coin Game \citep{foerster2018learning}—we apply Advantage Alignment to a continuous variant of the Negotiation Game \citep{cao2018emergent} and Melting Pot's Commons Harvest Open \citep{agapiou2023meltingpot20}. In doing so, we aim to demonstrate the scalability of our methods and offer insights and solutions applicable to complex, real-world agent interactions. 

Our key contributions are:
\vspace{-1mm}
\begin{itemize}[leftmargin=*]
\setlength\itemsep{5pt}
\item We introduce Advantage Alignment and Proximal Advantage Alignment (PAA), two opponent shaping algorithms derived from first principles and based on policy gradient estimators. 
\item We prove that LOLA (and its variations) and LOQA implicitly perform Advantage Alignment through different mechanisms. 
\item We extend REINFORCE-based opponent shaping to continuous action environments and achieve state-of-the-art results in a continuous action variant of the Negotiation Game~\citep{cao2018emergent}. 
\item We apply PAA to the Commons Harvest Open environment in Melting Pot 2.0 \citep{agapiou2023meltingpot20}, a high dimensional version of the \textit{tragedy of the commons} social dilemma, achieving state-of-the-art results and showcasing the scalability and effectiveness of our methods. 
\end{itemize}
\vspace{-2mm}

\section{Background}
\label{background}

\subsection{Social Dilemmas}

Social dilemmas describe situations in which selfish behavior leads to sub-optimal collective outcomes. Such dilemmas are often formalized as normal form games and constitute a subset of general-sum games. A classical example of a social dilemma is the Iterated Prisoner's Dilemma (IPD) \citep{rapoport1965prisoner}, in which two players can choose one of two actions: cooperate or defect. In the one-step version of the game, the dilemma occurs because defecting is a \textit{dominant} strategy, i.e., independently of what the opponent plays the agent is better off playing defect. However, by the reward structure of the game, both the agent and the opponent would achieve a higher utility if they played cooperate simultaneously. Beyond the IPD, other social dilemmas have been extensively studied in the literature, including the Chicken Game and the Coin Game \citep{lerer2018maintaining}, the latter of which has a similar reward structure to IPD but takes place in a grid world. In this paper we introduce a variation of the Negotiation Game (also known as the Exchange Game) \citep{DeVault2015TowardNT, lewis2017deal}, with a strong social dilemma component. Additionally, we evaluate our method on the Commons Harvest Open environment in Melting Pot 2.0 \citep{agapiou2023meltingpot20}, which exemplifies a large-scale social dilemma. In this environment, agents must balance short-term personal gains from overharvesting common resources against the long-term collective benefit of sustainable use.

\subsection{Markov Games}
In this work, we consider fully observable, general sum, $n$-player Markov Games \citep{shapley1953stochastic} which are represented by a tuple: $\mathcal{M} = (N, \mathcal{S}, \mathcal{A}, P, \mathcal{R}, \gamma)$. Here $\mathcal{S}$ is the state space, $\mathcal{A} := \mathcal{A}^1 \times \hdots \times \mathcal{A}^n$, is the joint action space for all players, $P:\mathcal{S} \times \mathcal{A} \to \Delta(\mathcal{S})$ maps from every state and joint action to a probability distribution over states, $\mathcal{R} = \{r^1, \hdots, r^n\}$ is the set of reward functions where each $r^i: \mathcal{S} \times \mathcal{A} \to \mathbb{R}$ maps every state and joint action to a scalar return and $\gamma \in [0,1]$ is the discount factor.

\subsection{Reinforcement Learning}
Consider two agents playing a Markov Game, 1 (agent) and 2 (opponent), with policies $\pi^1$ and $\pi^2$, parameterized by $\theta_1$ and $\theta_2$ respectively. We follow the notation of \citet{agarwal2021reinforcement}, let $\tau$ denote a trajectory with initial state distribution $\mu$ and (unconditional) distribution given by:
\begin{equation}
    \text{Pr}_{\mu}^{\pi^1, \pi^2}(\tau) = \mu(s_0) \pi^1(a_0 | s_0) \pi^2(b_0 | s_0) P(s_1 | s_0, a_0, b_0)\hdots
\end{equation}
Where $P(\cdot | s, a, b)$, often referred as the transition dynamics, is a probability distribution over the next states conditioned on the current state being $s$, agent taking action $a$ and opponent taking action $b$. Value-based methods like Q-learning \citep{watkins1992q} and SARSA \citep{rummery1994online} learn an estimate of the discounted reward using the Bellman equation: 
\begin{equation}
    \label{eq:BELLMAN}
    Q^1(s_t, a_t, b_t) = r^1(s_t, a_t, b_t) + \gamma \cdot \mathbb{E}_{s_{t+1}}\left[V^1(s_{t+1})|s_t, a_t, b_t\right].
\end{equation}
In policy optimization, both players aim to maximize their expected discounted return by performing gradient ascent with a REINFORCE  estimator \citep{williams1992simple} of the form:
\begin{equation}
    \label{eq:REINFORCE}
    \nabla_{\theta_1}V^{1}(\mu) = \mathbb{E}_{\tau \sim \text{Pr}_{\mu}^{\pi^1, \pi^2}} \left[\sum_{t=0}^\infty \gamma^t A^1(s_t, a_t, b_t) \nabla_{\theta_1} \log \pi^1 (a_t | s_t)\right].
\end{equation}
Here $A^1(s, a, b) := Q^1(s, a, b) - V^1(s)$ denotes the advantage of the agent taking action $a$ in state $s$ while the opponent takes action $b$.

\section{Opponent Shaping}
\label{opponent_shaping}

Opponent shaping, first introduced in LOLA \citep{foerster2018learning}, is a paradigm that assumes the learning dynamics of other players can be controlled via some mechanism to incentivize desired behaviors. LOLA and its variants assume that the opponent is a naive learner, i.e. an agent that performs gradient ascent on their value function, and differentiate through an imagined naive update of the opponent in order to shape it. 

LOQA \citep{aghajohari2024loqa} performs opponent shaping by controlling the Q-values of the opponent for different actions assuming that the opponent's policy is a softmax over these Q-values:
\begin{equation}
    \label{eq:IDEAL-LOQA}
    \hat{\pi}^2(b_t|s_t) := \frac{\exp Q^2(s_t, b_t)}{\sum_{b}\exp Q^2(s_t, b)},
\end{equation}
where $Q^2(s_t, b_t) := \mathbb{E}_{a \sim \pi^1}[Q^2(s_t, a, b_t)]$.
The key idea is that these Q-values depend on $\pi^1$, and hence the opponent policy $\hat\pi^2$ can be differentiated w.r.t. $\theta_1$:\footnote{See Appendix \ref{app:LOQA_grad} for a derivation of this expression.}
\begin{equation} \label{eq:GRAD-HATPI2}
    \nabla_{\theta_1} \hat\pi^2 (b_t | s_t) = \hat{\pi}^2(b_t|s_t) \left( \nabla_{\theta_1}Q^2(s_t, b_t) - \sum_b \hat{\pi}^2(b|s_t) \nabla_{\theta_1} Q^2(s_t, b)\right).
\end{equation}
This dependency of $\pi^2$ on $\theta_1$ leads to the emergence of an extra term in the policy gradient: 
\begin{equation}
    \label{eq:OPPONENT-SHAPING}
    \nabla_{\theta_1}V^{1}(\mu) = \mathbb{E}_{\tau \sim \text{Pr}_{\mu}^{\pi^1, \pi^2}} \left[\sum_{t=0}^\infty \gamma^t A^1(s_t, a_t, b_t) \left(\underbrace{\nabla_{\theta_1} \log \pi^1 (a_t | s_t)}_{\text{policy gradient term}} + \underbrace{\nabla_{\theta_1} \log \hat{\pi}^2 (b_t | s_t)}_{\text{opponent shaping term}}\right)\right].
\end{equation}
\citet{aghajohari2024loqa} demonstrate an effective way to account for this dependency using REINFORCE.
The present work builds on the ideas of LOQA, but reduces opponent shaping to its bare components to derive Advantage Alignment from first principles.

\section{Advantage Alignment}
\subsection{Method Description}
\label{algorithms}
\begin{algorithm}[t]
\caption{Advantage Alignment}
\label{algo:AA}
\begin{algorithmic}
\STATE \textbf{Initialize:} Discount factor $\gamma$, agent Q-value parameters $\phi^1$, t Q-value parameters $\phi_{\text{t}}^1$, actor parameters $\theta^1$, opponent Q-value parameters $\phi^2$, t Q-value parameters $\phi_{\text{t}}^2$, actor parameters $\theta^2$
\FOR{iteration$=1, 2, \hdots$}
    \STATE Run policies $\pi^1$ and $\pi^2$ for $T$ timesteps in environment and collect trajectory $\tau$ 
    \STATE Compute agent critic loss $L_C^1$ using the TD error with $r^1$ and $V^1$
    \STATE Compute opponent critic loss $L_C^2$ using the TD error with $r^2$ and $V^2$
    \STATE Optimize $L_C^1$ w.r.t. $\phi^1$ and $L_C^2$ w.r.t. $\phi^2$
    with optimizer of choice
    \STATE Compute generalized advantage estimates $\{A_1^1, \hdots, A_T^1\}$, $\{A_1^2, \hdots, A_T^2\}$
    \STATE Compute agent actor loss, $L_a^1$, summing  \eqref{eq:REINFORCE} and \eqref{eq:ADVANTAGE-ALIGNMENT}
    \STATE Compute opponent actor loss, $L_a^2$, summing  \eqref{eq:REINFORCE} and \eqref{eq:ADVANTAGE-ALIGNMENT}
    \STATE Optimize $L_a^1$ w.r.t. $\theta^1$ and $L_a^2$ w.r.t. $\theta^2$
    with optimizer of choice
\ENDFOR
\end{algorithmic}
\end{algorithm}
Motivated by the goal of scaling opponent shaping algorithms to more diverse and complex scenarios, we derive a simple and intuitive objective for efficient opponent shaping. We begin from the assumptions that agents are learning to maximize their expected return, and will behave in a fashion that is proportional to this goal:
\begin{assumption}
    \label{as:rationality}
    Each agent $i$ learns to maximize their value function: $\max V^i(\mu)$.
\end{assumption}
\begin{assumption}
\label{as:consistency}
  Each opponent $i$ acts proportionally to the exponent of their action-value function: $\pi^i(a | s) \propto \exp \left(\beta \cdot Q^i(s, a)\right)$.
\end{assumption}
Using \Eqref{eq:OPPONENT-SHAPING} and substituting $\hat\pi^2$ in place of $\pi^2$ (per Assumption \ref{as:consistency}), we obtain:
\[
    \nabla_{\theta_1}V^{1}(\mu) = \mathbb{E}_{\tau \sim \text{Pr}_{\mu}^{\pi^1, \pi^2}} \left[\sum_{t=0}^\infty \gamma^t A^1(s_t, a_t, b_t) \left(\nabla_{\theta_1} \log \pi^1 (a_t | s_t) + \nabla_{\theta_1} \log \hat\pi^2(b_t | s_t)\right)\right].
\]
The first term is the usual policy gradient. The second term is the opponent shaping term and will be our focus. Approximating the opponent's policy (right hand side of \eqref{eq:IDEAL-LOQA}) by ignoring the contribution due to the partition function, the opponent shaping term becomes:
\begin{equation}
    \beta \cdot \mathbb{E}_{\tau \sim \text{Pr}_{\mu}^{\pi^1, \pi^2}} \left[\sum_{t=0}^\infty \gamma^t A^{1}(s_t, a_t, b_t)\nabla_{\theta^1} Q^2(s_t, b_t) \right].
\end{equation}
The gradient of the Q-value can be estimated by a REINFORCE estimator, which leads to a nested expectation. \citet{aghajohari2024loqa} empirically showed that this nested expectation can be efficiently estimated from a single trajectory. We take the same approach (see Appendix \ref{app:AA-derivation}) to obtain:
\begin{equation}
    \label{eq:ADVANTAGE-ALIGNMENT}
    \beta \cdot \mathbb{E}_{\tau \sim \text{Pr}_{\mu}^{\pi^1, \pi^2}} \left[\sum_{t=0}^\infty \gamma^{t+1} \left(\sum_{k<t} \gamma^{t-k} A^{1}(s_k, a_k, b_k)\right)  A^2(s_t, a_t, b_t)\nabla_{\theta^1}\text{log } \pi^1(a_t|s_t) \right].
\end{equation}
The expression above captures the essence of opponent shaping: an agent should align its advantages with those of its opponent in order to steer towards trajectories that are mutually beneficial. More precisely, an agent increases the probability of actions that have high product between the sum of its past advantages and the advantages of the opponent at the current time step. For implementation details of the Advantage Alignment formula see Appendix \ref{app:AA-imp}. Equation \ref{eq:ADVANTAGE-ALIGNMENT} depends only on the log probabilities of the agent, which allows us to create a proximal surrogate objective that closely follows the PPO \citep{schulman2017proximal} formulation:
\begin{equation}
    \label{eq:PAA}
    \mathbb{E}_{\tau \sim \text{Pr}_{\mu}^{\pi^1, \pi^2}} \left[\min \left\{r_n(\theta_1)A^*(s_t, a_t, b_t), \;\text{clip}\left(r_n(\theta_1);1-\epsilon, 1+\epsilon \right)A^*(s_t, a_t, b_t) \right \}\right], \\
\end{equation}
where $r_n$ denotes the ratio of the policy (new) after $n$ updates of the original policy (old), and:
\begin{equation}
    \label{eq:INTEGRATED_ADVANTAGE}
    A^*(s_t, a_t, b_t) = \left(A^1(s_t, a_t, b_t) + \beta \gamma \cdot \left(\sum_{k<t} \gamma^{t-k} A^{1}(s_k, a_k, b_k) \right)A^2(s_t, a_t, b_t)\right).
\end{equation}

This surrogate objective in \eqref{eq:PAA} is used to formulate the Proximal Advantage Alignment (Proximal AdAlign) algorithm (see appendix \ref{app:PPA} for implementation details).

\textbf{Why is Assumption \ref{as:rationality} necessary?} Assumption \ref{as:rationality} allows the agent to influence the learning dynamics of the opponent, by controlling the values for different actions. After one iteration of the algorithm the agent changes the Q-values of the opponent for different actions and, since the opponent aims to maximize their expected return, it must change its behavior accordingly. 

\subsection{Analyzing Advantage Alignment}

\begin{figure}[t]
  \centering

  \begin{subfigure}{0.30\textwidth}
    \centering
    \begin{tikzpicture}
        \definecolor{darkgreen}{RGB}{0, 100, 0} 
        \definecolor{darkred}{RGB}{180, 4, 18}   
        \definecolor{darkblue}{RGB}{59, 76, 192}   %

        \matrix (m) [matrix of nodes, 
        nodes={
            draw, 
            minimum height=5em, 
            minimum width=5em,  
            anchor=center, 
            align=center, 
            inner sep=0, 
            outer sep=0,
            text height=1.5ex,  
            text depth=0.25ex   
        },
        column sep=-\pgflinewidth, 
        row sep=-\pgflinewidth,
        cells={nodes={draw, thick}}]{
            ~ & \textbf{+} & \textbf{-} \\
            \textbf{+} & \textbf{+} & \textbf{-} \\
            \textbf{-} & \textbf{-} & \textbf{+} \\
        };

        \begin{scope}[]
            \fill[darkblue!85] (m-2-2.north west) rectangle (m-2-2.south east);
            \fill[darkblue!85] (m-2-3.north west) rectangle (m-2-3.south east);
            \fill[darkred!85] (m-3-2.north west) rectangle (m-3-2.south east);
            \fill[darkred!85] (m-3-3.north west) rectangle (m-3-3.south east);
        \end{scope}

        \node[anchor=north east] at (m-1-1.north east) {\(A^2_t\)};
        \node[anchor=south west, xshift=-0.3em, yshift=-0.3em] at (m-1-1.south west) {\(\sum\limits_{k<t} \gamma^{t-k} A^1_k\)};
        \node[anchor=center, text=white] at (m-2-2.center) {\textbf{+}};
        \node[anchor=center, text=white] at (m-2-3.center) {\textbf{-}};
        \node[anchor=center, text=white] at (m-3-2.center) {\textbf{-}};
        \node[anchor=center, text=white] at (m-3-3.center) {\textbf{+}};

        \draw[thick] (m-1-1.north west) -- (m-1-3.north east);
        \draw[thick] (m-1-3.north east) -- (m-3-3.south east);
        \draw[thick] (m-3-1.south west) -- (m-3-3.south east);
        \draw[thick] (m-2-1.north west) -- (m-2-3.north east);
        \draw[thick] (m-3-1.north west) -- (m-3-3.north east);
        \draw[thick] (m-2-2.north west) -- (m-3-2.south west);
        \draw[thick] (m-2-3.north west) -- (m-3-3.south west);
        \draw[thick] (m-1-1.north west) -- ([xshift=0pt, yshift=2em]m-1-1.south east);
    \end{tikzpicture}
    \caption{}
    \label{fig:products}
\end{subfigure}
  \hfill
  \begin{subfigure}{0.5\textwidth}
    \centering
    \includegraphics[width=\linewidth]{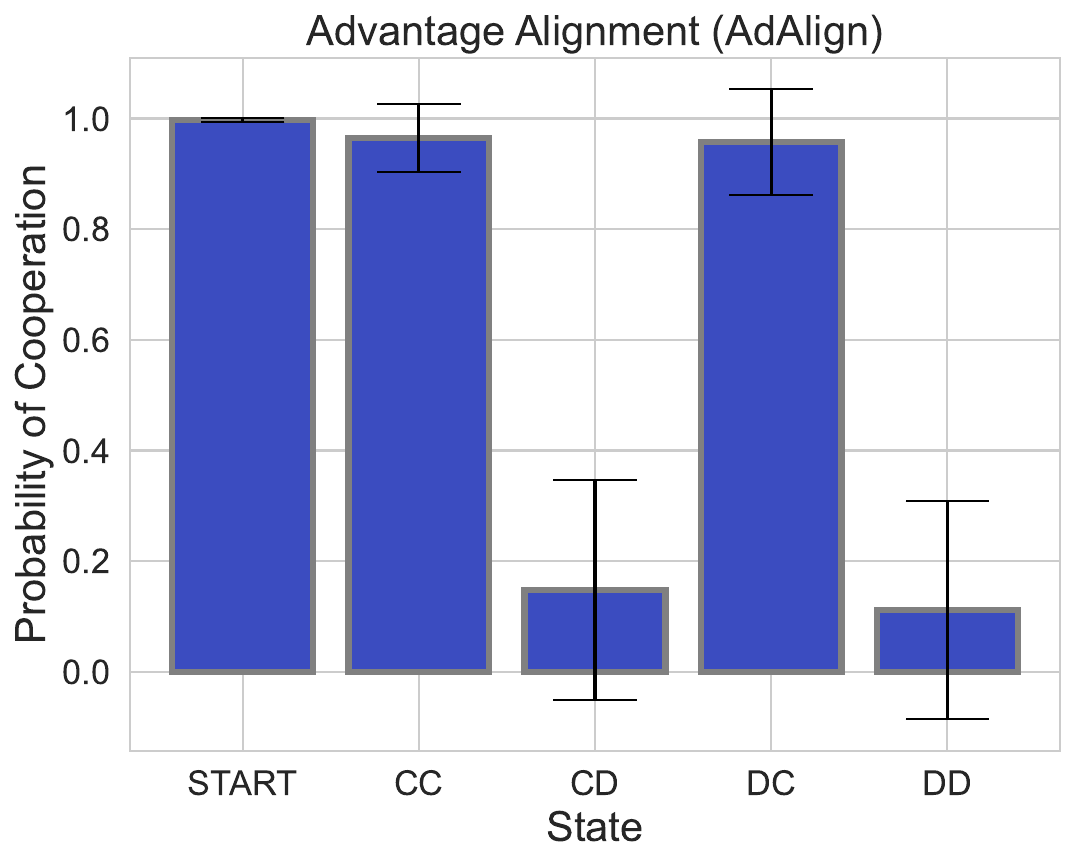}
    \caption{}
    \label{fig:ipd}
  \end{subfigure}
  \caption{(a) The sign of the product of the gamma-discounted past advantages for the agent, and the current advantage of the opponent, indicates whether the probability of taking an action should increase or decrease. (b) The empirical probability of cooperation of Advantage Alignment for each previous combination of actions in the  one step history Iterated Prisoner's Dilemma, closely resembles tit-for-tat. Results are averaged over 10 random seeds, the black whiskers show one std.}
\end{figure}

Equation \ref{eq:ADVANTAGE-ALIGNMENT} yields four possible different cases for controlling the direction of the gradient of the log probability of the policy. As with the usual policy gradient estimator, the sign multiplying the log probability indicates whether the probability of taking an action should increase or decrease.  Intuitively, when the interaction with the opponent has been positive (blue in figure \ref{fig:products}) the advantages of the agent align with that of the opponent: the advantage alignment term increases the log probability of taking an action if the advantage of the opponent is positive and decreases it if it is negative. In contrast, if the interaction has been negative (red in figure \ref{fig:products}) the advantages are at odds with each other: the advantage alignment term decreases the log probability of taking an action if the advantage of the opponent is positive and increases it if it is negative. We now relate existing opponent shaping algorithms to advantage alignment, and argue that these algorithms use the same underlying mechanisms. Theorem \ref{th:LOLA_as_AA} shows that LOLA update from~\citet{foerster2018learning} can be written as a policy gradient method with an opponent shaping term similar to~\eqref{eq:INTEGRATED_ADVANTAGE}. This shows the fundamental relationship between opponent-shaping dynamics and advantage multiplications. Theorem \ref{th:LOQA_as_AA} proves that LOQA's opponent shaping term has the same form as that of Advantage Alignment, differing only by a scalar term.

\begin{theorem}[LOLA as an advantage alignment estimator] 
\label{th:LOLA_as_AA}
Given a two-player game where players 1 and 2 have respective policies $\pi^1(a|s)$ and $\pi^2(b|s)$, where each policy is parametrised such that the set of gradients $\nabla_{\theta_2} \log \pi^2(a|s)$ for all pairs $(a,s)$ form an orthonormal basis, the LOLA update for the first player correspond to a reinforce update with the following opponent shaping term

\begin{equation}\label{eq:LOLA_PG}
        \beta \cdot \mathbb{E}_{\tau \sim \Pr_{\mu}^{\pi^1, \pi^2}} \left[\sum_{t=0}^\infty \gamma^{t}  \left(\sum_{k=t}^\infty d_{\gamma,{k-t}} \gamma^{k-t}  A^1_kA^2_{k-t} \right) \nabla_{\theta^1}\log \pi^1(a_t|s_t) \right],
\end{equation}
where $A^i_k :=  A^{i}(s_k, a_k, b_k)$ and $d_{\gamma,k}$ is the occupancy measure of the tuple $(a_k,b_k,s_k)$ and $\beta$ is the step size of the naive learner. See appendix \ref{app:LOLA} for a proof.
\end{theorem}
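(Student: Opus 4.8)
The plan is to begin from the defining LOLA computation and peel it down to a Hessian--vector product. In LOLA player~1 differentiates its own value through one imagined naive-learner step of the opponent, $\theta_2 \mapsto \theta_2 + \beta\,\nabla_{\theta_2}V^2$; expanding $V^1(\theta_1,\theta_2+\beta\nabla_{\theta_2}V^2)$ to first order in $\beta$, the extra opponent-shaping term in player~1's gradient is the mixed product $\beta\,(\nabla_{\theta_1}\nabla_{\theta_2}V^2)\,\nabla_{\theta_2}V^1$. The whole theorem then amounts to rewriting this product so that it matches the advantage-multiplication structure of~\eqref{eq:LOLA_PG}.

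First I would expand each factor as a policy-gradient estimator. The easy factor is $\nabla_{\theta_2}V^1$: since player~1's value depends on $\theta_2$ only through the opponent's action probabilities, the REINFORCE identity behind~\eqref{eq:REINFORCE} gives $\nabla_{\theta_2}V^1 = \E_\tau[\sum_t \gamma^t A^1(s_t,a_t,b_t)\,\nabla_{\theta_2}\log\pi^2(b_t|s_t)]$. For the cross-Hessian I would use that $\log\Pr_{\mu}^{\pi^1,\pi^2}(\tau)$ splits additively into a $\theta_1$-part (the $\pi^1$ factors) and a $\theta_2$-part (the $\pi^2$ factors), so the mixed second derivative of the log-likelihood vanishes and only the outer-product-of-scores term survives; after the usual causal reduction of the player-2 return-to-go to its advantage this leaves $\nabla_{\theta_1}\nabla_{\theta_2}V^2 = \E_\tau[\,(\sum_s \nabla_{\theta_1}\log\pi^1(a_s|s_s))(\sum_t \gamma^t A^2(s_t,a_t,b_t)\,\nabla_{\theta_2}\log\pi^2(b_t|s_t))^\top\,]$.

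Next I would contract the two factors over the $\theta_2$ coordinates. This produces inner products $\langle \nabla_{\theta_2}\log\pi^2(b_t|s_t),\,\nabla_{\theta_2}\log\pi^2(b_{t'}|s_{t'})\rangle$ between opponent scores, and here the orthonormality hypothesis does the decisive work: each such inner product collapses to the indicator that the two state--action pairs coincide. This diagonalizes the otherwise-quadratic coupling of the two score sums, so that the agent score $\nabla_{\theta_1}\log\pi^1(a_t|s_t)$ at a given time becomes weighted by a single product $A^1_k A^2_{k-t}$ of advantages whose time indices are locked together by the matching constraint, with the discounted visitation probability of the matched tuple emerging as the occupancy measure $d_{\gamma,k-t}$. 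Collecting the powers of $\gamma$ and relabeling the summation indices should then reproduce~\eqref{eq:LOLA_PG}.

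I expect the main obstacle to be the bookkeeping inside the cross-Hessian and the index matching, rather than any conceptual difficulty. Two places need care. First, reducing $\nabla_{\theta_1}\nabla_{\theta_2}V^2$ from raw returns to the advantage $A^2$ must respect causality for \emph{both} score factors simultaneously, which is exactly the subtlety that motivates the higher-order estimator of~\citet{foerster2018dice}; a carelessly subtracted baseline will bias the second-order term. Second, after the orthonormality collapse I must track precisely which time indices survive so that each surviving pair carries the correct power of $\gamma$ and the correct occupancy weight $d_{\gamma,k-t}$---this is where the double-sum structure $\sum_{k\ge t}$ of~\eqref{eq:LOLA_PG} is born. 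Once the diagonalization is set up correctly, the remaining manipulations are routine relabeling.
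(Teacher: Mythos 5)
Your overall machinery --- expanding one factor as a second-order REINFORCE estimator, expanding the other as an occupancy-weighted sum of opponent scores, and letting orthonormality collapse the $\theta_2$-contraction into an indicator, followed by reindexing --- is exactly the machinery of the paper's proof. The gap is in your very first step: you identify the LOLA correction as $\beta\,(\nabla_{\theta_1}\nabla_{\theta_2}V^2)\,\nabla_{\theta_2}V^1$, i.e.\ the mixed Hessian of the \emph{opponent's} value contracted with $\nabla_{\theta_2}V^1$. The paper's proof (its \eqref{eq:LOLA-GRAD}) instead analyzes $\beta\,(\nabla_{\theta_1}\nabla_{\theta_2}V^1)\,\nabla_{\theta_2}V^2$: the term obtained by differentiating the Taylor surrogate $V^1+(\Delta\theta_2)^\top\nabla_{\theta_2}V^1$ with respect to $\theta_1$ while treating $\Delta\theta_2=\beta\,\nabla_{\theta_2}V^2$ as a constant vector (the paper explicitly refrains from differentiating anything belonging to player 2 with respect to $\theta_1$). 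Your term is the \emph{other} second-order contribution, the one generated by differentiating through $\Delta\theta_2$; it is a defensible reading of ``the LOLA update'' (it is the shaping term emphasized in the original LOLA paper), but it is not the term whose advantage-alignment form the theorem states, and the two are not interchangeable.

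This is fatal for matching the stated formula, not merely cosmetic. In either derivation, the factor expanded as a cross-Hessian is the one whose advantage inherits the ``late'' index $k \ge t$, because it carries the double score sum that gets convolution-reindexed; the factor expanded as a plain $\theta_2$-gradient collapses under orthonormality into $d_{\gamma,\cdot}$ times an advantage at the lagged index $k-t$. With your choice of factors the roles are swapped: carrying your plan to completion yields the coefficient $\sum_{k\ge t} d_{\gamma,k-t}\,\gamma^{k-t}A^2_k A^1_{k-t}$ on $\nabla_{\theta_1}\log\pi^1(a_t|s_t)$, i.e.\ \eqref{eq:LOLA_PG} with $A^1$ and $A^2$ exchanged. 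Since the two players have distinct reward functions, this transposed expression is genuinely different from the theorem's, so your argument --- even executed flawlessly --- proves a different identity. The fix is to swap which factor plays which role: expand $\nabla_{\theta_1}\nabla_{\theta_2}V^1$ with the second-order REINFORCE estimator (using the reordering and baseline subtraction to bring out $A^1_t$), and expand $\nabla_{\theta_2}V^2$ as $\sum_{(s,a,b)}d_\gamma(a,b,s)\,A^2(a,b|s)\,\nabla_{\theta_2}\log\pi^2(b|s)$; after that substitution, your orthonormality collapse and index bookkeeping go through essentially verbatim and recover \eqref{eq:LOLA_PG}.
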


\begin{theorem}[LOQA as an advantage alignment estimator]
\label{th:LOQA_as_AA} Under Assumption~\ref{as:consistency},
the opponent shaping term in LOQA is equivalent to the opponent shaping term in~\Eqref{eq:ADVANTAGE-ALIGNMENT} up to $(1 - \Tilde{\pi}^2(b_k|s_k))$
\begin{equation}
     \beta \cdot \mathbb{E}_{\tau \sim \Pr_{\mu}^{\pi^1, \pi^2}} \left[\sum_{t=0}^\infty  \gamma^{t+1} \left(\sum_{k<t}^\infty  \gamma^{t-k} (1 - \Tilde{\pi}^2(b_k|s_k)) A^{1}_k \right) A^2_t\nabla_{\theta^1}\log \pi^1(a_t|s_t) \right],
\end{equation}
$\Tilde{\pi}^2(b_t|s_t)$ approximates the opponent policy as defined in LOQA. For a proof see appendix \ref{app:LOQA}.
\end{theorem}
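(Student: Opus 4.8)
The plan is to start from the isolated opponent shaping term of Equation~\eqref{eq:OPPONENT-SHAPING}, $\mathbb{E}_\tau\big[\sum_t \gamma^t A^1_t\,\nabla_{\theta_1}\log\hat\pi^2(b_t|s_t)\big]$, and to follow the same REINFORCE manipulation used to derive Advantage Alignment in Appendix~\ref{app:AA-derivation}, the only change being that I retain the partition-function contribution that the Advantage Alignment derivation discards. Under Assumption~\ref{as:consistency} the opponent's policy is the softmax $\hat\pi^2=\tilde\pi^2$, so Equation~\eqref{eq:GRAD-HATPI2} (with the $\beta$ coming from $\hat\pi^2\propto\exp(\beta Q^2)$) expresses $\nabla_{\theta_1}\log\hat\pi^2(b_t|s_t)=\beta\big(\nabla_{\theta_1}Q^2(s_t,b_t)-\sum_b\hat\pi^2(b|s_t)\,\nabla_{\theta_1}Q^2(s_t,b)\big)$.

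The crux is to track what the partition sum contributes in a single-trajectory estimator. Whereas the Advantage Alignment derivation drops $\sum_b\hat\pi^2(b|s_t)\,\nabla_{\theta_1}Q^2(s_t,b)$ entirely (the ``ignoring the partition function'' step in the main text), here I keep it but evaluate it along the realized trajectory, i.e. I retain only the $b=b_t$ summand. This collapses the two Q-gradient terms into $\beta\,(1-\hat\pi^2(b_t|s_t))\,\nabla_{\theta_1}Q^2(s_t,b_t)$, so the LOQA shaping term equals the pre-reorganization Advantage Alignment term with an extra scalar $(1-\tilde\pi^2(b_t|s_t))$ multiplying each time step.

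I would then estimate $\nabla_{\theta_1}Q^2(s_t,b_t)$ with the identical single-trajectory REINFORCE estimator of Appendix~\ref{app:AA-derivation}: because $Q^2(s_t,b_t)$ depends on $\theta_1$ only through agent~1's future actions, its gradient expands into a discounted sum of future score functions $\nabla_{\theta_1}\log\pi^1(a_l|s_l)$ weighted by the opponent advantages $A^2_l$. Substituting this gives a double sum whose outer index $t$ carries $\gamma^t A^1_t(1-\hat\pi^2(b_t|s_t))$ and whose inner index runs over future score-function times; swapping the order of summation so that the score-function time becomes the outer index and relabeling it $t$ turns the factors $A^1_t(1-\hat\pi^2(b_t|s_t))$ into the past advantages indexed by $k<t$. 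This reproduces Equation~\eqref{eq:ADVANTAGE-ALIGNMENT} verbatim except for the weight $(1-\tilde\pi^2(b_k|s_k))$ on each $A^1_k$, which is exactly the claim.

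The main obstacle is the bookkeeping of the nested expectation rather than any conceptual difficulty: I must justify that restricting the partition sum to the sampled action $b_t$ is the consistent single-sample estimator used by LOQA (and not a source of structural bias), and I must match discount exponents and summation ranges so that the reorganized double sum yields precisely the $\gamma^{t+1}\sum_{k<t}\gamma^{t-k}$ weighting. Since every step except the extra $(1-\tilde\pi^2)$ factor is shared with the Advantage Alignment derivation, I would mirror that derivation line by line and merely carry the additional factor through the reindexing.
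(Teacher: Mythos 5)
Your proposal is correct and follows essentially the same route as the paper's proof: the paper obtains the factor $(1-\tilde\pi^2(b_t|s_t))$ by observing that LOQA only differentiates the Q-value of the sampled action $b_t$ (the remaining partition-function terms are produced by function approximators and treated as constants w.r.t.\ $\theta_1$), which is precisely your ``retain only the $b=b_t$ summand'' step, and then plugs $\nabla_{\theta_1}\log\tilde\pi^2(b_t|s_t)=(1-\tilde\pi^2(b_t|s_t))\nabla_{\theta_1}Q^2_t(b_t)$ into the opponent-shaping term and reuses the Advantage Alignment REINFORCE expansion with the scalar carried through the reindexing. The only cosmetic difference is that the paper packages this step as a quotient-rule differentiation of the partially non-differentiable $\tilde\pi^2$ rather than as dropping the non-sampled terms from the full softmax gradient in \eqref{eq:GRAD-HATPI2}, so the bias concern you flag is resolved by definition: the theorem is about what LOQA actually computes, not about an unbiased estimator of the full gradient.
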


Having established the connection between existing opponent shaping algorithms and Advantage Alignment, we now focus on analyzing the theoretical properties of Advantage Alignment itself. We investigate the impact of the Advantage Alignment term on Nash equilibria. Theorem \ref{th:AA_Nash} demonstrates that Advantage Alignment preserves Nash equilibria, ensuring that if agents are already playing equilibrium strategies, the Advantage Alignment updates will not cause the policy gradient to deviate from them locally.

\begin{theorem}[Advantage Alignment preserves Nash equilibria]
\label{th:AA_Nash}
Advantage Alignment preserves Nash equilibria. That is, if a joint policy \( (\pi_1^*, \pi_2^*) \) constitutes a Nash equilibrium, then applying the Advantage Alignment formula will not change the policy, as the gradient contribution of the advantage alignment term is zero. The proof can be found in Appendix \ref{app:AA_preserves_NE}.
\end{theorem}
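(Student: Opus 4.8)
The plan is to treat the Advantage Alignment update as the ordinary REINFORCE gradient plus the opponent-shaping term in \eqref{eq:ADVANTAGE-ALIGNMENT}, and to show that at a Nash equilibrium each piece vanishes separately. First I would fix the working definition of a (local) Nash equilibrium $(\pi_1^*,\pi_2^*)$ as a profile in which each player's policy is a best response to the other, equivalently $\nabla_{\theta_1} V^1(\mu)=0$ and $\nabla_{\theta_2} V^2(\mu)=0$. Applying the policy gradient theorem with the advantage form of \eqref{eq:REINFORCE}, stationarity localizes to the reachable states: for every $s$ in the discounted occupancy, $\sum_a \nabla_{\theta_1}\pi^1(a\mid s)\,\bar A^1(s,a)=0$, where $\bar A^1(s,a):=\mathbb{E}_{b\sim\pi^2}[A^1(s,a,b)]$ vanishes on the support of $\pi^1_*(\cdot\mid s)$ by the best-response optimality condition (and symmetrically for player~2). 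In particular the plain REINFORCE part of the Advantage Alignment gradient is exactly $\nabla_{\theta_1}V^1(\mu)=0$, so it remains only to show that the alignment term of \eqref{eq:ADVANTAGE-ALIGNMENT} contributes nothing.

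Next I would reduce that term to a per-state quantity by the tower property. Writing $W_t:=\sum_{k<t}\gamma^{t-k}A^1(s_k,a_k,b_k)$ for the accumulated past advantage and conditioning on the history through $s_t$ (but before $a_t,b_t$), $W_t$ becomes measurable and factors out, leaving the conditional expectation $\mathbb{E}_{a,b}\!\left[A^2(s_t,a,b)\,\nabla_{\theta_1}\log\pi^1(a\mid s_t)\right]=\sum_a \nabla_{\theta_1}\pi^1(a\mid s_t)\,\bar A^2(s_t,a)$, with $\bar A^2(s,a):=\mathbb{E}_{b\sim\pi^2}[A^2(s,a,b)]$. Hence the whole alignment gradient equals $\beta\gamma\,\mathbb{E}_{\tau}\!\left[\sum_t \gamma^{t} W_t\,G(s_t)\right]$ with $G(s):=\sum_a \nabla_{\theta_1}\pi^1(a\mid s)\,\bar A^2(s,a)$, so it suffices to prove $G(s)=0$ on the occupancy support at equilibrium.

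The heart of the argument, and the step I expect to be the main obstacle, is establishing this vanishing. The clean case is a pure-strategy equilibrium: along the equilibrium path the realized joint action is the one actually played, so $Q^i(s,a,b)=V^i(s)$ and every on-path advantage $A^1_k$ and $A^2_t$ is identically zero, which kills both $W_t$ and $G$ simultaneously. The delicate point is the general (mixed) case, because $\bar A^2(s,a)$ is the \emph{opponent's} advantage viewed as a function of \emph{player~1's} action, and the best-response conditions only force it to vanish in expectation over $\pi^1$, namely $\mathbb{E}_{a\sim\pi^1}[\bar A^2(s,a)]=0$, rather than pointwise. I would therefore tie the conclusion to the optimality condition on the opponent's realized action: under Assumption~\ref{as:consistency} the opponent plays the best-response softmax, so its marginal advantage $Q^2(s,b)-V^2(s)$ vanishes on the support it actually samples from, forcing the relevant advantage factor in \eqref{eq:ADVANTAGE-ALIGNMENT} to be zero on-path and hence $G(s)=0$. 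The careful part is checking that this ``advantages vanish at equilibrium'' statement is applied to exactly the advantage appearing in \eqref{eq:ADVANTAGE-ALIGNMENT}, and that off-support actions (where $\nabla_{\theta_1}\pi^1$ need not vanish) contribute nothing because $\pi^1_*$ places no mass there; once that bookkeeping is in place, both the REINFORCE and the alignment contributions are zero and the policy is left unchanged.
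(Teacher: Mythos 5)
Your reduction of the alignment term is correct, and you have correctly isolated the crux (whether \(G(s)=\sum_a \nabla_{\theta_1}\pi^1(a\mid s)\,\bar A^2(s,a)\) vanishes at a mixed equilibrium), but your resolution of that crux is a non sequitur, and the claim \(G(s)=0\) is in fact false in general. The opponent's best-response (or softmax-consistency, Assumption~\ref{as:consistency}) condition controls the marginalization \(\tilde A^2(s,b):=\E_{a\sim\pi^1}[A^2(s,a,b)]\), forcing it to vanish for \(b\in\mathrm{supp}(\pi^2)\); but \(G(s)\) is built from the \emph{other} marginalization \(\bar A^2(s,a):=\E_{b\sim\pi^2}[A^2(s,a,b)]\), the opponent's advantage as a function of \emph{your} action. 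In the expectation \(\E_{a,b}[A^2(s,a,b)\nabla_{\theta_1}\log\pi^1(a\mid s)]\) the opponent's action \(b\) is integrated out first (the score function does not depend on \(b\)), so the opponent's indifference condition never gets to act, and vanishing of \(\tilde A^2\) on \(\mathrm{supp}(\pi^2)\) implies nothing about \(\bar A^2\). Concretely, take a single-state repeated \(2\times 2\) game with opponent payoffs \(R^2(a_1,b_1)=R^2(a_2,b_2)=2\), \(R^2(a_1,b_2)=R^2(a_2,b_1)=0\), and agent payoffs \(R^1(a_1,b_1)=0,\ R^1(a_1,b_2)=3,\ R^1(a_2,b_1)=1,\ R^1(a_2,b_2)=0\); the unique mixed Nash is \(\pi^1=(1/2,1/2)\), \(\pi^2=(3/4,1/4)\), giving \(\bar A^2(s,a_1)=+1/2\), \(\bar A^2(s,a_2)=-1/2\), hence \(G(s)=\nabla_{\theta_1}\pi^1(a_1\mid s)\neq 0\). (In this single-state example the full alignment term still vanishes, but only because \(\E[W_t]=0\) and \(W_t\) is independent of the constant \(G\) --- an argument you do not make, and one that breaks in a multi-state game where past advantages correlate with the current state.)

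The paper's proof takes a different and much blunter route that avoids this trap entirely. Its Lemma (Zero Advantages At Nash) concerns each player's \emph{own-action} advantages \(A^i(a\mid s)\) --- the joint advantages marginalized over the other player's equilibrium play --- and shows these vanish \emph{pointwise} on the support of the equilibrium policies, because all on-support actions are maximizers of the same marginalized \(Q\). The theorem then follows by reading every advantage in the update --- the plain term \(A^i_t\), the accumulated factor \(\sum_{k<t}\gamma^{t-k}A^i_k\), and the opponent factor \(A^j_t\) --- as such an own-action advantage, so every summand is pointwise zero on-path and no conditional-expectation analysis is needed. Your pure-strategy argument coincides with this; the mixed case is exactly where you diverge from the paper and where your route cannot be completed, since with joint advantages \(A^2(s_t,a_t,b_t)\) the fact you need is simply untrue. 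The repair is to adopt the paper's (implicit) convention that the advantages entering the update are the marginalized own-action ones; under that reading the result is immediate from the lemma, and the \(G(s)\) machinery becomes unnecessary.
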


\section{Experiments}
\label{experiments}
We follow the evaluation protocol of LOQA \citep{aghajohari2024loqa}, where the fixed policy that is generated by the algorithm is evaluated \textit{zero-shot} against a distribution of policies.

\subsection{Iterated Prisoner's Dilemma}

We consider the \emph{full history} version of IPD, where a gated recurrent unit (GRU) policy conditions on the full trajectory of observations before sampling an action. In this experiment we follow the architecture used in POLA \citep{zhao2022proximal} (for details see appendix \ref{app:IPD}). We also consider trajectories of length 16 with a discount factor, $\gamma$, of 0.9. As shown in figure \ref{fig:ipd}, Advantage Alignment agents consistently achieve a policy that resembles \emph{tit-for-tat} \citep{rapoport1965prisoner} empirically. Tit-for-tat consists of cooperating on the first move and then mimicking the opponent's previous move in subsequent rounds. 


\subsection{Coin Game}

The Coin Game is a 3x3 grid world environment where two agents, red and blue, take turns collecting coins. During each turn, a coin of either red or blue color spawns at a random location on the grid. Agents receive a reward of +1 for collecting any coin but incur a penalty of -3 if the opponent collects a coin of their color. A Pareto-optimal strategy in the Coin Game is for each agent to collect only the coins matching their color, as this approach maximizes the total returns for both agents. Figure \ref{fig:league_cg} demonstrates that Advantage Alignment agents perform similarly to LOQA agents when evaluated against a league of different policies: Advantage Alignment agents cooperate with themselves, cooperate with Always Cooperate (AC) and are not exploited by Always Defect (AD). 

\begin{figure}[th] 
  \centering
  \vspace{-5mm}
  \includegraphics[width=0.8\linewidth]{ 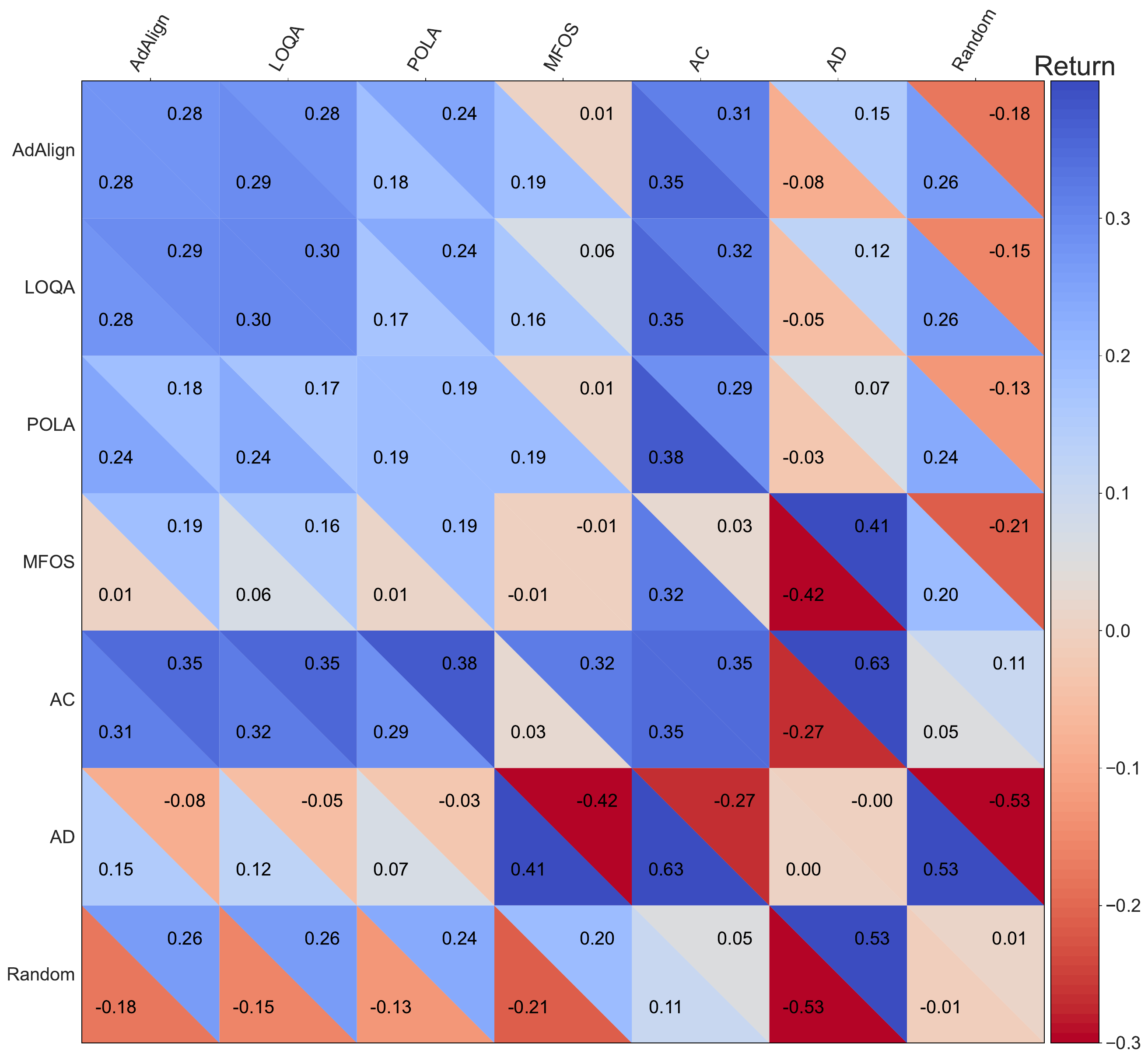}
  \caption{
  \small League Results of the Advantage Alignment agents in Coin Game: LOQA, POLA, MFOS, Always Cooperate (AC), Always Defect (AD), Random and Advantage Alignment (AdAlign). Each number in the plot is computed by running 10 random seeds of each agent head to head with 10 seeds of another for 50 episodes of length 16 and averaging the rewards.}
  \label{fig:league_cg}
\end{figure}

\subsection{Negotiation Game}

In the original Negotiation Game, two agents bargain over $n$ types of items over multiple rounds. In each round, both the quantity of items and the value each agent places on them are randomly set, but the agents only know their own values. They take turns proposing how to divide the items over a random number of turns. Agents can end the negotiation by agreeing to a proposal, and rewards are based on how well the agreement matches their private values. If they don't reach an agreement by the final turn, neither gets a reward. We modify the game first by making the values public, otherwise Advantage Alignment would have an unfair edge over PPO agents by using the opponent's value function. Secondly, we do one-shot, simultaneous negotiations instead of negotiation rounds lasting multiple iterations. Third, we modify the reward function so that every negotiation yields a reward. For a given item with agent value $v_a$, the reward of the agent $r_a$ depends on the proposal of the agent $p_a$ and the proposal of the opponent $p_o$ where $p_a, p_o \in [0, 5]$: $r_a =p_a \cdot v_a / \max(5, p_a + p_o).$

Note that the max operation at the denominator serves to break the invariance of the game dynamics to the scale of proposals. For example, without the max operation, there would be no difference between $p_a = 1, p_o = 1$ and $p_a = 5, p_o = 5$. The social dilemma in this version of the negotiation game arises because both agents are incentivized to take as many items as possible, but by doing so, they end up with a lower return compared to the outcome they would achieve if they split the items based on their individual utilities. A Pareto-optimal strategy entails allowing the agent to take all the items that are more valuable to them, and similarly for their opponent (this constitutes the Always Cooperate (AC) strategy in Figure \ref{fig:negotiation_league_a}). We experiment with a high-contrast setting where the utilities of objects for the agents are orthogonal to each other: There are two possible combinations of values in this setup: $v_a = 5, v_b = 1$ or $v_a = 1, v_b = 5$. 

As shown in Figure \ref{fig:negotiation_league_a}, PPO agents do not learn to solve the social dilemma. They learn the naive policy of bidding high for every item which means they get a low return against themselves. PPO agents trained with shared rewards get a high return against themselves, only to be exploited by PPO agents. They do not learn to abandon cooperation and retaliate after they are defected against.  Advantage Alignment agents solves the social dilemma. They cooperate with themselves while remaining non-exploitable against Always Defect.

\begin{figure}[t]
  \centering
  \begin{subfigure}{0.44\textwidth}
    \centering
    \includegraphics[width=\linewidth]{ 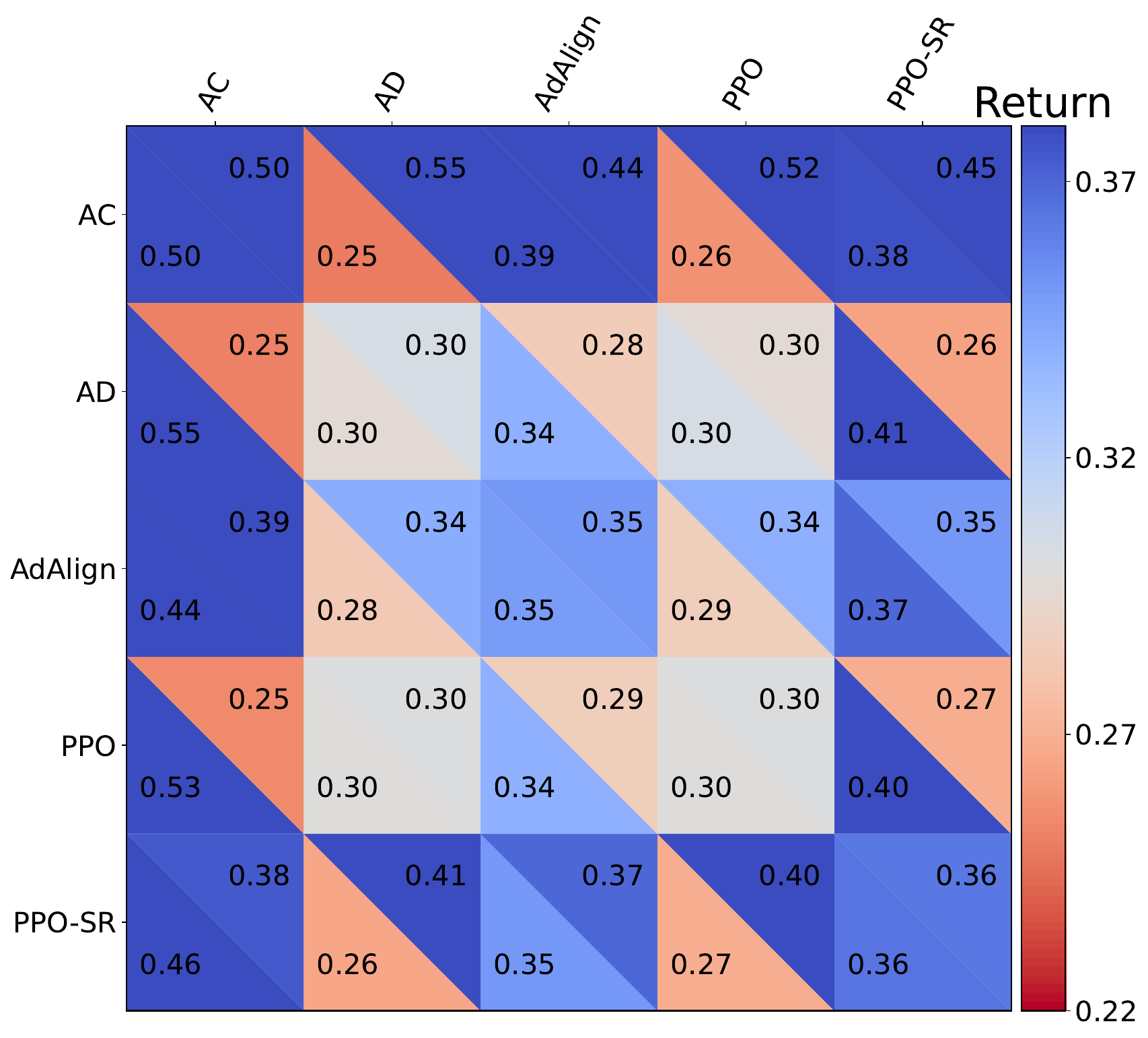}
    \caption{}
    \label{fig:negotiation_league_a}
  \end{subfigure}
  \hfill
  \begin{subfigure}{0.55\textwidth}
    \centering
    \includegraphics[width=\linewidth]{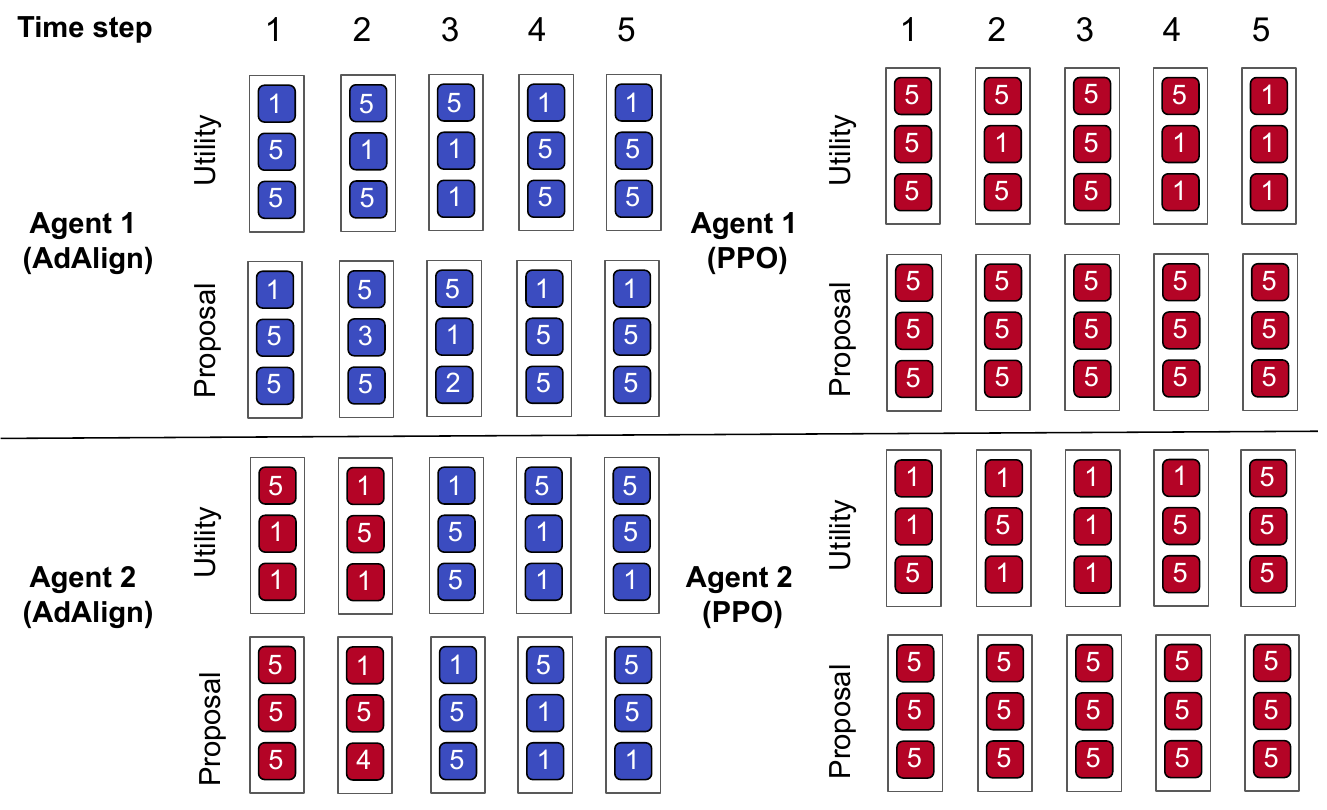}
    \caption{}
    \label{fig:negotiation_league_b}
  \end{subfigure}
  \caption{ (a) League Results of the Advantage Alignment agents in the Negotiation Game: Always Cooperate (AC), an agent which proposes $5$ for items which are more valuable to it and $1$ for items that are less valuable to it, Always Defect (AD), an agent that proposes $5$ regardless of the values, Advantage Alignment (AdAlign), PPO and PPO summing rewards (PPO-SR). Each number in the plot is computed by running 10 random seeds of each agent head to head with 10 seeds of another for 50 episodes of length 16 and averaging the rewards. Note that against Always Defect, Always Cooperate  gets an average return of $0.25$ while Always Defect gets $0.30$. (b) Sample trajectories of AdAlign vs. AdAlign and PPO vs. PPO in the negotiation game. The numbers show the utilities and proposals, which have been rounded to integer values. AdAlign agents defect first (red) and progressively cooperate with each other (blue) while PPO agents Always Defect.}
\end{figure}

\subsection{Melting Pot's Commons Harvest Open}

\begin{figure}[t] 
  \centering
  \includegraphics[width=1\linewidth]{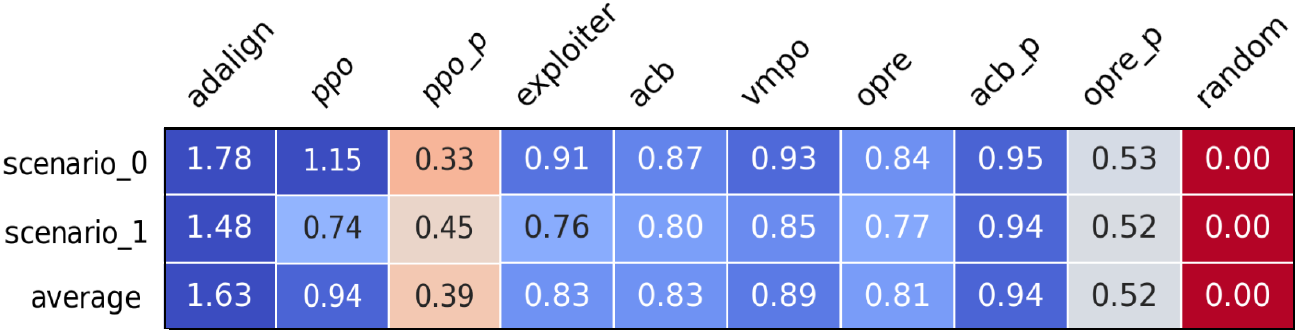}
  \caption{Comparison of different reinforcement learning algorithms in Melting Pot's 2.0. Commons Harvest Open. The score is the focal return per capita, min-max normalized between a random agent and an exploiter baseline (ACB agent with an LSTM policy/value network) trained for $10^9$ steps. Following the protocol of the Melting Pot contest, we select the best agent out of 10 seeds and evaluate it 100 times.}
  \label{fig:Melting Pot_normalized}
\end{figure}

In Commons Harvest Open \citep{agapiou2023meltingpot20}, a group of 7 agents interact in a environment in which there is 6 bushes with different amounts of apples. Agents receive a reward of 1 for any apple consumed. Consumed apples regrow with a probability dependent on the number of apples in their $L_2$ neighborhood; specifically, if there are no apples nearby, consumed apples do not regrow. This mechanism creates a tension between agents: they must exercise restraint to prevent extinction while also feeling compelled to consume quickly out of fear that others may over-harvest.

There are a number of complications that make the Melting Pot environments particularly challenging. First, the environments are partially-observable: agents can only see a local window around themselves. Second, the partial observations are in the form of high-dimensional raw pixel data. Third, these environments often involve multiple agents—seven in the case of Commons Harvest Open—which increases the complexity of interactions and coordination. Therefore, agents need to remember past interactions with other agents to infer their motives and policies. All these factors, combined with the inherent social dilemma reward structure of the game, make finding policies that are optimal with respect to social \textit{and} individual objectives a non-trivial task. 

We train a GTrXL transformer \citep{parisotto2019stabilizing} for 34k steps, with context length of 30, and compare the normalized focal return per capita of our agents against the baselines in Melting Pot 2.0: Advantage-critic baseline (acb) \citep{espeholt2018impala}, V-MPO (vmpo) \citep{song2019vmpo}, options as responses (opre) \citep{vezhnevets20a}, and prosocial versions of opre (opre\_p) and acb (acb\_p) that encourage cooperation. We also compare to our own implementations of PPO (ppo) and PPO with summed rewards (ppo\_p). Figure \ref{fig:Melting Pot_normalized} shows that our best advantage alignment agent achieves on average $1.63$ normalized per capita focal return in the Commons Harvest evaluation scenarios, significantly outperforming all baselines (see Appendix \ref{app:CH}). Figure \ref{fig:commons_harvest_frames}, qualitatively shows the reason why Proximal Advantage Alignment outperforms PPO and PPO with summed rewards on one of the evaluation scenarios.

\begin{figure}[t] 
  \centering
  \includegraphics[width=1\linewidth]{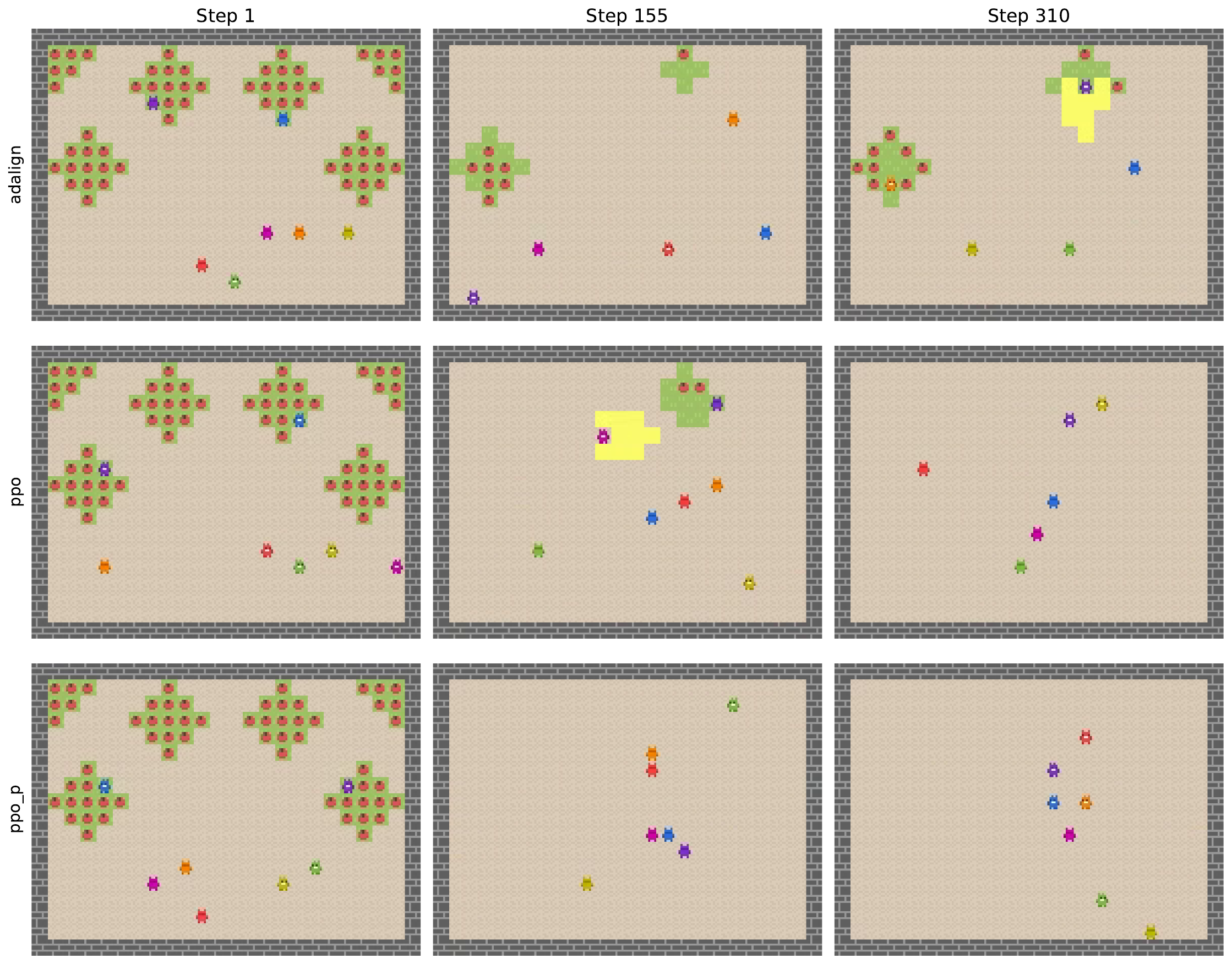}
  \caption{Frames of evaluation trajectories for different algorithms. Qualitatively, we demonstrate that Proximal Advantage Alignment (adalign) also outperforms naive PPO (ppo) and PPO with summed rewards. The evaluation trajectories show how adalign agents are able to maintain a bigger number of apple bushes from extinction (2) for a longer time that either ppo or ppo\_p. Note that in the Commons Harvest evaluation two exploiter agents, green and yellow, play against a focal population of 5 copies of the evaluated algorithm.}
  \label{fig:commons_harvest_frames}
\end{figure}

\section{Related Work}
\label{related_work}

The Iterated Prisoner's Dilemma (IPD) was introduced by \citet{rapoport1965prisoner}.  \emph{Tit-for-tat} was discovered as a robust strategy against a population of opponents in IPD by \citet{Axelrod84}, who organized multiple IPD tournaments. It was discovered only recently that IPD contains strategies that extort rational opponents into exploitable cooperation  \citep{press2012iterated}. \citet{sandholm1996multiagent} were the first to demonstrate that two Q-learning agents playing IPD converge to mutual defection, which is suboptimal. Later, \citet{foerster2018learning} demonstrated that the same is true for policy gradient methods. \cite{bertrand2023qlearners} were able to show that with optimistic initialization and \emph{self-play}, Q-learning agents find a \textit{Pavlov} strategy in IPD.

Opponent shaping was first introduced in LOLA \cite{foerster2018learning}, as a method for controlling the learning dynamics of opponents in a game. A LOLA agent assumes the opponents are naive learners and differentiates through a one step look-ahead optimization update of the opponent. More formally, LOLA maximizes $V^1(\theta^1, \theta^2 + \Delta \theta^2)$ where $\Delta \theta^2$ is a naive learning step in the direction that maximizes the opponent's value function $V^2(\theta^1, \theta^2)$. Variations of LOLA have been introduced to have formal stability guarantees \citep{letcher2021stable}, learn consistent update functions assuming mutual opponent shaping \citep{willi2022cola} and be invariant to policy parameterization \citep{zhao2022proximal}. More recent work performs opponent shaping by having an agent play against a best response approximation of their policy \citep{aghajohari2024best}. LOQA \citep{aghajohari2024loqa}, on which this work is based, performs opponent shaping by controlling the Q-values of the opponent using REINFORCE \citep{williams1992simple} estimators. 

Another approach to finding socially beneficial equilibria in general sum games relies on modeling the problem as a meta-game, where meta-rewards correspond to the returns on the inner game, meta-states correspond to joint policies of the players, and the meta-actions are updates to these policies. \cite{alshedivat2018continuous} introduce a continuous adaptation framework for multi-task learning that uses meta-learning to deal with non-stationary environments. MFOS \citep{lu2022modelfree} uses model-free optimization methods like PPO and genetic algorithms to optimize the meta-value of the meta-game. More recently Meta-Value Learning \citep{cooijmans2023metavalue} parameterizes the meta-value as a neural network and applies Q-learning to capture the future effects of changes to the inner policies. Shaper \citep{khan2024scaling},  scales opponent shaping to high-dimensional general-sum games with temporally extended actions and long time horizons. It does so, by simplifying MFOS and effectively capturing both intra-episode and inter-episode information.

Melting Pot 2.0 \citep{agapiou2023meltingpot20} introduces a comprehensive suite of multi-agent reinforcement learning environments that focus on social interactions and coordination challenges, providing a valuable benchmark for evaluating the scalability and effectiveness of reinforcement learning algorithms in complex, cooperative-competitive settings. The Negotiation Game, introduced by \cite{DeVault2015TowardNT, lewis2017deal} and subsequently refined by \cite{cao2018emergent}, has proven to be a significant benchmark for studying general-sum games. It integrates elements of strategy and social dilemmas, necessitating that agents balance cooperation and competition to optimize their outcomes. \cite{DBLP:conf/atal/NoukhovitchLLC21} analyze this complex benchmark, underscoring its importance in the field. Future investigations will turn towards an even more sophisticated simulation proposed by \cite{zhang2022ai}, which involves negotiations among countries and regions with diverse resource distributions and preferences in addressing climate change. 

\section{Conclusion}
\label{conclusion}
In this work, we introduced \textit{Advantage Alignment}, a novel family of algorithms designed to address the fundamental challenge of achieving self-interested cooperation in multi-agent reinforcement learning, particularly in social dilemmas. By deriving our algorithms from first principles, we distilled opponent shaping to its core components, providing a simple yet powerful mechanism to align agents' advantages and foster mutually beneficial behaviors. Our approach unifies and generalizes existing opponent shaping methods, such as LOLA and LOQA, demonstrating that they implicitly perform Advantage Alignment through different mechanisms. This unification not only simplifies the mathematical formulation of opponent shaping but also reduces computational complexity, enabling more efficient and scalable algorithms.

Our experiments across a range of social dilemmas, including the Iterated Prisoner's Dilemma, Coin Game, and a continuous action variant of the Negotiation Game, demonstrate that Advantage Alignment consistently achieves state-of-the-art cooperation and robustness against exploitation. Notably, we extended our methods to complex, large-scale, general-sum environments like Melting Pot's Commons Harvest Open, addressing challenges that arise from partial observability, high-dimensional observations, and multi-agent interactions. In these settings, Advantage Alignment agents learned sophisticated strategies that balance individual and collective interests, showcasing the potential of our algorithms to scale to real-world applications.

The significance of our work lies in providing a principled, efficient, and scalable solution to the longstanding problem of self-interested cooperation in general-sum games. By enabling agents to autonomously align their interests with one another, Advantage Alignment paves the way for more harmonious and socially beneficial interactions in artificial intelligence systems integrated into human decision-making processes. This has profound implications for the development of AI agents in diverse domains, from autonomous vehicles navigating shared environments to AI assistants collaborating with humans and other agents.

\section{Acknowledgements}
J.D. would like to thank Michael Noukhovitch for the discussions related to the Negotiation Game. We would also like to acknowledge the technical support of Olexa Bilaniuk. This work was financially supported by  A.C.'s CIFAR Canadian AI chair and Canada Research Chair in Learning Representations that Generalize
Systematically. 

\bibliography{iclr2025_conference}
\doparttoc \faketableofcontents \part{} 
\newpage

\appendix
\addcontentsline{toc}{section}{Appendix}
\part{Appendix} 

\parttoc

\newpage

\section{Mathematical Derivations}
\subsection{Deriving the Advantage Alignment Formula}
\label{app:AA-derivation}
In this section we derive the Advantage Alignment formula in equation \eqref{eq:ADVANTAGE-ALIGNMENT} from the opponent shaping expression in equation \eqref{eq:OPPONENT-SHAPING} and assumption \ref{as:consistency}. Recall assumption \ref{as:consistency}:
\[
    \pi^i(a | s) \propto \exp{ \beta \mathbb{E}_{b\sim \pi^{3-i}(\cdot|s)}[ Q^i(s, a,b)]}
\]
Note that if $i=1, 3-i = 2$ and if $i=2, 3-i=1$. 
Recall the opponent shaping policy gradient expression:
\[
    \nabla_{\theta_1}V^{1}(\mu) = \mathbb{E}_{\tau \sim \text{Pr}_{\mu}^{\pi^1, \pi^2}} \left[\sum_{t=0}^\infty \gamma^t A^1(s_t, a_t, b_t) \left(\underbrace{\nabla_{\theta_1} \log \pi^1 (a_t | s_t)}_{\text{(A)}} + \underbrace{\nabla_{\theta_1} \log \pi^2 (b_t | s_t)}_{\text{(B)}}\right)\right]
\]
We expand the term (B) above splitting the expectation, by assumption \ref{as:consistency}, we can write:
\begin{align*}
    &\mathbb{E}_{\tau \sim \text{Pr}_{\mu}^{\pi^1, \pi^2}} \left[\sum_{t=0}^\infty \gamma^t A^1(s_t, a_t, b_t)\nabla_{\theta_1} \log \pi^2 (b_t | s_t)\right] \\
    &= \beta \cdot \mathbb{E}_{\tau \sim \text{Pr}_{\mu}^{\pi^1, \pi^2}} \left[\sum_{t=0}^\infty \gamma^t A^1(s_t, a_t, b_t)\nabla_{\theta_1} \log \exp \mathbb{E}_{a \sim \pi^1(|s_t)} [Q^2(s_t, a, b_t)]\right]\\
    &= \beta \cdot\mathbb{E}_{\tau \sim \text{Pr}_{\mu}^{\pi^1, \pi^2}} \left[\sum_{t=0}^\infty \gamma^t A^1(s_t, a_t, b_t)\nabla_{\theta_1}\mathbb{E}_{a \sim \pi^1(|s_t)} [Q^2(s_t, a, b_t)]\right].
\end{align*}
where is the second line we used the fact that the expected advantage is zero. 
For convenience of notation we define:
\[
    r^i_t := r^i(s_t, a_t, b_t), \; A^i_t := A^i(s_t, a_t, b_t) 
\]
These are the reward and advantage of agent $i$ at time step $t$ after taking action $a_t$ and opponent taking action $b_t$. From the Bellman equation \eqref{eq:BELLMAN} we expand as follows:
\begin{align}
    &\beta \cdot \mathbb{E}_{\tau \sim \text{Pr}_{\mu}^{\pi^1, \pi^2}} \left[\sum_{t=0}^\infty \gamma^t A^{1}(s_t, a_t, b_t)\nabla_{\theta^1} \mathbb{E}_{a \sim \pi^1(|s_t)} [Q^2(s_t, a, b_t)] \right] \\
    &= \beta \cdot \mathbb{E}_{\tau \sim \text{Pr}_{\mu}^{\pi^1, \pi^2}} \left[\sum_{t=0}^\infty \gamma^t A^{1}_t\nabla_{\theta^1} \left( \mathbb{E}_{s'}\left[r^2_t + \gamma \cdot V^2(s')\biggr| s_t,b_t\right] \right) \right] \label{step:bellman}\\
    &= \beta \cdot \mathbb{E}_{\tau \sim \text{Pr}_{\mu}^{\pi^1, \pi^2}} \left[\sum_{t=0}^\infty \gamma^{t+1} A^{1}_t\nabla_{\theta^1} \mathbb{E}_{s'}\left[V^2(s')\biggr| s_t,b_t\right] \right]\\
    &= \beta \cdot \mathbb{E}\underset{\tau \sim \text{Pr}_{\mu}^{\pi^1, \pi^2}}{} \left[\sum_{t=0}^\infty \gamma^{t+1} A^{1}_t \mathbb{E}_{\tau' \sim \text{Pr}_{\mu}^{\pi^1, \pi^2}}\left[\sum_{k=0}^\infty \gamma^{k} A^{2}_k\nabla_{\theta^1}\text{log } \pi^1(a_k'|s_k')\biggr| s_t,b_t\right]  \right] \label{step:pg}\\
    &= \beta \cdot \mathbb{E}\underset{\tau \sim \text{Pr}_{\mu}^{\pi^1, \pi^2}}{} \left[\sum_{t=0}^\infty  \mathbb{E}_{\tau' \sim \text{Pr}_{\mu}^{\pi^1, \pi^2}}\left[\sum_{k=0}^\infty \gamma^{k+t+1} A^{1}_t  A^{2}_k\nabla_{\theta^1}\text{log } \pi^1(a_k'|s_k')\biggr| s_t,b_t\right]  \right] \label{step:measurable}\\
    &= \beta \cdot \mathbb{E}\underset{\tau \sim \text{Pr}_{\mu}^{\pi^1, \pi^2}}{} \left[\mathbb{E}_{\tau' \sim \text{Pr}_{\mu}^{\pi^1, \pi^2}}\left[\sum_{t=0}^\infty  \sum_{k=t+1}^\infty \gamma^{t+k+1} A^{1}_t  A^{2}_k\nabla_{\theta^1}\text{log } \pi^1(a_k'|s_k')\biggr| s_t,b_t \right]  \right] \label{step:linearity}\\
    &= \beta \cdot \mathbb{E}_{\tau \sim \text{Pr}_{\mu}^{\pi^1, \pi^2}} \left[\sum_{t=0}^\infty  \sum_{k=t+1}^\infty \gamma^{t+k+1} A^{1}_t  A^{2}_k\nabla_{\theta^1}\text{log } \pi^1(a_k|s_k) \right]. \label{step:iterated}
\end{align}
We use Bellman equation in line (\ref{step:bellman}), and the policy gradient theorem in line (\ref{step:pg}). In line (\ref{step:measurable}), we use the fact that $\gamma^t A^{1}(s_t, a_t, b_t)$ is measurable w.r.t. the natural filtration of the process up to time $t$, $\mathcal{F}_t$, and the independence of the two terms conditioned on $\mathcal{F}_t$ by the Markov property. In line (\ref{step:linearity}) we use linearity of expectation. In line (\ref{step:iterated}) we use the rule of the iterated expectations. Reorganizing the summations in causal form we get the desired result:
\begin{equation}
    \tag{\ref{eq:ADVANTAGE-ALIGNMENT}}
    \beta \cdot \mathbb{E}_{\tau \sim \text{Pr}_{\mu}^{\pi^1, \pi^2}} \left[\sum_{t=0}^\infty \gamma^{t+1} \left(\sum_{k<t}\gamma^{t-k} A^{1}(s_k, a_k, b_k)\right)  A^2(s_t, a_t, b_t)\nabla_{\theta^1}\text{log } \pi^1(a_t|s_t) \right].
\end{equation}

\subsection{Proof of Theorem \ref{th:LOLA_as_AA}}
\label{app:LOLA}
\begin{lemma}[Policy changes under gradient ascent]
Given a policy $\pi_\theta(a|s)$ parametrised such that the set of gradients $\nabla_\theta \log \pi_\theta(a|s)$ for all pairs $(a,s)$ form an orthonormal basis, and a value function $V(\theta)$, the following holds:
\begin{align}
    \frac{d}{d\alpha}\pi_{\theta + \alpha \nabla_\theta V}(a|s) = \nabla_{\theta} V \cdot \nabla_\theta \pi_\theta(a|s) &=  d_{\gamma}(s) \pi_{\theta}(a|s)^2 A(a|s)\\
    d_{\gamma}(s) &\equiv \sum_{t=0}^{\infty} \gamma^t Pr(s_t = s)\\
    A(a|s) &\equiv Q(a|s) - V(s)
\end{align}
\end{lemma}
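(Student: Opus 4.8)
The plan is to prove the two equalities in turn. The first equality is immediate from the chain rule: differentiating the map $\alpha \mapsto \pi_{\theta+\alpha\nabla_\theta V}(a|s)$ and evaluating at $\alpha=0$ gives the directional derivative of $\pi_\theta(a|s)$ in the direction $\nabla_\theta V$, which is exactly the parameter-space inner product $\nabla_\theta V \cdot \nabla_\theta \pi_\theta(a|s)$.

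For the second equality I would expand both factors of the inner product and then exploit the orthonormality hypothesis to collapse the resulting sum. First, the log-derivative identity gives $\nabla_\theta \pi_\theta(a|s) = \pi_\theta(a|s)\,\nabla_\theta \log \pi_\theta(a|s)$. Second, I would rewrite the single-agent policy gradient (the specialization of \eqref{eq:REINFORCE} with no opponent) in occupancy-measure form: interchanging the sum over $t$ with the trajectory expectation and recognizing $\sum_{t}\gamma^t \Pr(s_t=s')=d_\gamma(s')$ yields
\[
    \nabla_\theta V = \sum_{s'}\sum_{a'} d_\gamma(s')\,\pi_\theta(a'|s')\,A(a'|s')\,\nabla_\theta \log \pi_\theta(a'|s').
\]
Taking the inner product of this expression with $\pi_\theta(a|s)\,\nabla_\theta \log \pi_\theta(a|s)$ and invoking orthonormality, $\nabla_\theta \log \pi_\theta(a'|s') \cdot \nabla_\theta \log \pi_\theta(a|s)=\delta_{(a',s'),(a,s)}$, every cross term vanishes and only the diagonal term at $(a',s')=(a,s)$ survives, giving $d_\gamma(s)\,\pi_\theta(a|s)^2\,A(a|s)$, as claimed.

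I do not expect a genuine obstacle; the computation is direct once the two ingredients---the occupancy-measure form of the policy gradient and the log-derivative identity---are in place. The only step meriting care is the use of the orthonormality assumption, which is precisely the device that turns the parameter-space inner product into a projection onto the coordinate indexed by $(a,s)$, thereby selecting the single $d_\gamma\,\pi_\theta\,A$ product out of the double sum. I would also spell out the occupancy-measure rewriting explicitly, since it is what produces the $d_\gamma(s)$ factor that appears in the statement.
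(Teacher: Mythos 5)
Your proposal is correct and matches the paper's own proof essentially step for step: both rewrite the policy gradient in occupancy-measure form, expand $\nabla_\theta \pi_\theta(a|s) = \pi_\theta(a|s)\nabla_\theta \log \pi_\theta(a|s)$, and use the orthonormality assumption to collapse the inner product to the single diagonal term $d_\gamma(s)\pi_\theta(a|s)^2 A(a|s)$. The only addition is that you explicitly justify the first equality as a directional derivative via the chain rule, which the paper leaves implicit.
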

\begin{proof}
The policy gradient theorem gives us:
\begin{align}
    \nabla_{\theta}V(\theta) &= \mathbb{E}_{\tau \sim \pi} \sum_{t=0}^{\infty}\gamma^t A(a_t|s_t) \nabla_\theta \log \pi_{\theta}(a_t|s_t)\\
    &= \sum_{(s,a)} d_{\gamma}(s) \pi_{\theta}(a|s) A(a|s)\nabla_\theta \log \pi_\theta(a|s)
\end{align}
Taking the dot product of this expression with $\nabla_\theta \pi_\theta (a'|s') = \pi_\theta(a'|s') \nabla_\theta \log \pi_\theta(a'|s')$ and invoking the assumed orthonormality of gradients:
\begin{align}
    &\nabla_\theta \pi_\theta (a'|s') \cdot \nabla_{\theta}V(\theta) \\
    &=\sum_{(s,a)} d_{\gamma}(s) \pi_{\theta}(a|s)\pi_{\theta}(a'|s') A(a|s) \bigg(\nabla_\theta \log \pi_\theta(a|s) \cdot \nabla_\theta \log \pi_\theta(a'|s')\bigg)\\
    &= d_{\gamma}(s') \pi_{\theta}(a'|s')^2 A(a'|s')
\end{align}

\end{proof}

\textbf{Theorem 1.} (LOLA policy gradient)
\textit{Given a two-player game where players 1 and 2 have respective policies $\pi^1(a|s)$ and $\pi^2(b|s)$, where each policy is parametrised such that the set of gradients $\nabla_{\theta_2} \log \pi^2(a|s)$ for all pairs $(a,s)$ form an orthonormal basis, the LOLA update for the first player correspond to a reinforce update with an advantage 
\begin{equation}
    \tag{\ref{eq:LOLA_PG}}
     A_{LOLA}^*(s_t, a_t, b_t) = A^1(s_t, a_t, b_t) + \beta \cdot \sum_{k=t}^\infty d_{\gamma,{k-t}} \gamma^{k-t}  A^1_kA^2_{k-t}  .
\end{equation}}
where $A^i_k :=  A^{i}(s_k, a_k, b_k)$ and $d_{\gamma,k}$ is the occupancy measure of the tuple $(a_k,b_k,s_k)$


\begin{proof}
LOLA \citep{foerster2018learning} optimizes the return of the agent under an imagined optimization step of the opponent (assuming the opponent is a naive learning algorithm). Under their notation, a LOLA agent optimizes $V^1(\theta_1, \theta_2 + \Delta\theta_2)$ where $\Delta\theta_2$ is a gradient ascent step on the parameters of the opponent $\theta_2$. Note that along this proof because we consider the method proposed by~\citep{foerster2018learning} we use their way to compute gradients. Particularly,  one does not use Assumption~\ref{as:consistency}, and consequently assume that $\nabla_{\theta_1} \log \pi_{\theta_2} =0$ (and respectively  $\nabla_{\theta_2} \log \pi_{\theta_1} =0$.) 

Since computing this value function explicitly is difficult, LOLA uses the first-order Taylor expansion surrogate objective:
\begin{equation}
    V^1(\theta^1, \theta^2 + \Delta\theta_2) \approx V^1(\theta_1, \theta_2) + \left( \Delta\theta_2 \right)^T \nabla_{\theta_2} V^1(\theta_1, \theta_2)
\end{equation}
The gradient of the expression above w.r.t. the parameters $\theta_1$ of the agent  is given by
\begin{equation}
    \label{eq:LOLA-GRAD}
    \nabla_{\theta_1}V^1(\theta^1, \theta^2 + \alpha \nabla_{\theta_2} V^2(\theta_1,\theta_2)) =
    \nabla_{\theta_1}V^1(\theta_1, \theta_2) + \beta
    \left(\nabla_{\theta_1}\nabla_{\theta_2}V^1(\theta^1, \theta^2)\right)   
    \nabla_{\theta_2} V^2(\theta_1, \theta_2)
    .
\end{equation}
The first-order terms above is computed using the Advantage form of the REINFORCE estimator, which is given by equation \eqref{eq:REINFORCE}. \citet{foerster2018learning} derive the following REINFORCE estimator for the second-order term:
\begin{align}
    &\nabla_{\theta_1}\nabla_{\theta_2}V^1(\theta^1, \theta^2) \\
    &=\mathbb{E}_{\tau \sim \text{Pr}_{\mu}^{\pi^1, \pi^2}}\left[\sum_{t=0}^\infty \gamma^t r^1_t\left(\sum_{k=0}^t \nabla_{\theta_1}\log \pi^1(a_t|s_t) \right)\left(\sum_{k=0}^t \nabla_{\theta_2}\log \pi^2(b_t|s_t)\right)^\top\right]
\end{align}
Now, we use the following fact
\begin{equation}
    \sum_{t=0}^\infty c_t (\sum_{k=0}^t a_k)(\sum_{l=0}^t b_l)=   \sum_{t=0}^\infty c_t \sum_{S=0}^t \sum_{k=0}^S a_k b_{S-k} =
    \sum_{S=0}^\infty \sum_{t=S}^\infty\sum_{k=0}^S a_k b_{S-k} c_t=
    \sum_{S=0}^\infty  \sum_{k=0}^S a_k b_{S-k} \sum_{t=S}^\infty c_t
    \notag
\end{equation}
to expand the second order term beginning from \eqref{eq:LOLA-GRAD}, to bring out the advantage $A_t^1$:
\begin{align}
    &\nabla_{\theta_1}\nabla_{\theta_2}V^1(\theta^1, \theta^2) \\
    &=\mathbb{E}_{\tau \sim \text{Pr}_{\mu}^{\pi^1, \pi^2}}\left[\sum_{t=0}^\infty \gamma^t r^1_t\left(\sum_{k=0}^t \nabla_{\theta_1}\log \pi^1(a_t|s_t) \right)\left(\sum_{k=0}^t \nabla_{\theta_2}\log \pi^2(b_t|s_t)\right)^\top\right]\\
    &= \mathbb{E}_{\tau \sim \text{Pr}_{\mu}^{\pi^1, \pi^2}}\left[\sum_{S=0}^\infty \left(\sum_{k=0}^S \nabla_{\theta_1}\log \pi^1(a_k|s_k) \nabla_{\theta_2}\log \pi^2(b_{S-k}|s_{S-k})^\top\right) \sum_{t=S}^\infty \gamma^t r^1_t\right] \label{step:reorder}\\
    &= \mathbb{E}_{\tau \sim \text{Pr}_{\mu}^{\pi^1, \pi^2}}\left[\sum_{t=0}^\infty \left(\sum_{k=0}^t \nabla_{\theta_1}\log \pi^1(a_k|s_k) \nabla_{\theta_2}\log \pi^2(b_{t-k}|s_{t-k})^\top \right)\sum_{l=t}^\infty \gamma^l r^1_l\right]\\
    &= \mathbb{E}_{\tau \sim \text{Pr}_{\mu}^{\pi^1, \pi^2}}\left[\sum_{t=0}^\infty\left(\sum_{k=0}^t \nabla_{\theta_1}\log \pi^1(a_k|s_k) \nabla_{\theta_2}\log \pi^2(b_{t-k}|s_{t-k})^\top \right)\gamma^t \mathbb{E}\left[\sum_{l=0}^\infty \gamma^l r^1_{l+t}\right]\right]\label{step:tower}\\
    &= \mathbb{E}_{\tau \sim \text{Pr}_{\mu}^{\pi^1, \pi^2}}\left[\sum_{t=0}^\infty \left(\sum_{k=0}^t \nabla_{\theta_1}\log \pi^1(a_k|s_k) \nabla_{\theta_2}\log \pi^2(b_{t-k}|s_{t-k})^\top \right)\gamma^t Q^1(s_t,a_t, b_t)\right]\\
    &= \mathbb{E}_{\tau \sim \text{Pr}_{\mu}^{\pi^1, \pi^2}}\left[\sum_{t=0}^\infty\left(\sum_{k=0}^t \nabla_{\theta_1}\log \pi^1(a_k|s_k) \nabla_{\theta_2}\log \pi^2(b_{t-k}|s_{t-k})^\top \right) \gamma^t A^1_t\label{step:baseline-subtract}\right],
\end{align}
where we reorder the terms of the summation to sum over future rewards instead of past gradient terms in line \ref{step:reorder}, we use the law of iterated expectation in line \ref{step:tower} and a baseline subtraction in line \ref{step:baseline-subtract}.

Per Equation \eqref{eq:LOLA-GRAD}, we multiply this Hessian with the gradient of the value function
\begin{align}
    \nabla_{\theta_2}V^2(\theta_1,\theta_2) &= \mathbb{E}_{\tau \sim \pi} \sum_{t=0}^{\infty}\gamma^t A^2(a_t,b_t|s_t) \nabla_{\theta_2} \log \pi^2(b_t|s_t)\\
    &= \sum_{(s,a,b)} d_{\gamma}(a,b,s) A^2(a,b|s)\nabla_{\theta_2} \log \pi^2(b|s)
\end{align}
where where $d_{\gamma}(a,b,s)$ is the occupancy measure of the state actions tuple $(a,b,s)$, and use the assumption that the gradients $(\nabla_{\theta_2} \log \pi^2 (a|s))_{(a,s)}$ form an orthonormal basis to obtain
\[
    \mathbb{E}_{\tau \sim \text{Pr}_{\mu}^{\pi^1, \pi^2}}\left[\sum_{t=0}^\infty\left(\sum_{k=0}^t \nabla_{\theta_1}\log \pi^1(a_k|s_k)  d_{\gamma}(a_{t-k},b_{t-k},s_{t-k}) A^2_{t-k} \right) \gamma^t A^1_t\right].
\]

To completes the proof, we finally need to switch the summations to get 
\[
    \mathbb{E}_{\tau \sim \text{Pr}_{\mu}^{\pi^1, \pi^2}}\left[\sum_{t=0}^\infty\left(\sum_{k=t}^\infty\gamma^k A^1_k d_{\gamma}(a_{k-t},b_{k-t},s_{k-t}) A^2_{k-t} \right)  \nabla_{\theta_1}\log \pi^1(a_t|s_t) \right].
\]
\end{proof}

\subsection{Gradient of LOQA}
\label{app:LOQA_grad}
Recall the opponent policy approximation used in LOQA, which takes a softmax over the Q-values of the opponent. We assume exact estimates of these Q-values:
\begin{equation}
    \tag{\ref{eq:IDEAL-LOQA}}
    \hat{\pi}^2(b_t|s_t) := \frac{\exp Q^2(s_t, a_t, b_t)}{\sum_{b}\exp Q^2(s_t, a_t, b)}
\end{equation}
Note that $\hat{\pi}^2(b_t|s_t)$ is differentiable w.r.t. the parameters $\theta_1$ of the policy of the agent because the Q-values depend on $\pi_1$. Therefore, we can use the policy gradient theorem (see \cite{aghajohari2024loqa}) to differentiate the value function of the opponent w.r.t. the parameters of the agent. For convenience of notation we define:
\[
    Q^i_t(b) := Q^i(s_t, a_t, b)
\]
Computing the gradient of the approximated opponent's policy we get:
\begin{align}
    \nabla_{\theta_1}\hat{\pi}^2(b_t|s_t) &=  \nabla_{\theta_1} \left( \frac{\exp Q^2(s_t, a_t, b_t)}{\sum_{b}\exp Q^2(s_t, a_t, b)} \right)\\
    &= \frac{\nabla_{\theta_1}\exp Q^2_t(b_t)}{\sum_{b}\exp Q^2_t(b)} - \frac{\exp Q^2_t(b_t) \nabla_{\theta_1}\sum_{b}\exp Q^2_t(b)}{\left(\sum_{b}\exp Q^2_t(b)\right)^2} \label{step:quotient}\\
    &= \frac{\exp Q^2_t(b_t)\nabla_{\theta_1}Q^2_t(b_t)}{\sum_{b}\exp Q^2_t(b)} - \frac{\exp Q^2_t(b_t) \sum_{b}\exp Q^2_t(b) \nabla_{\theta_1} Q^2_t(b)}{\left(\sum_{b}\exp Q^2_t(b)\right)^2}\\
    &= \frac{\exp Q^2_t(b_t)}{\sum_{b}\exp Q^2_t(b)} \left( \nabla_{\theta_1}Q^2_t(b_t) - \sum_b \frac{\exp Q^2_t(b) \nabla_{\theta_1} Q^2_t(b)}{\sum_{b}\exp Q^2_t(b)}\right)\\
    &= \hat{\pi}^2(b_t|s_t) \left( \nabla_{\theta_1}Q^2_t(b_t) - \sum_b \hat{\pi}^2(b|s_t) \nabla_{\theta_1} Q^2_t(b)\right) \label{step:approximation},
\end{align}
where we used the quotient rule in line (\ref{step:quotient}) and equation \eqref{eq:IDEAL-LOQA} in line(\ref{step:approximation}).
By the chain rule, the gradient of the log probability is:
\[
    \nabla_{\theta_1}\log \hat{\pi}^2(b_t|s_t)
    = \frac{\nabla_{\theta_1} \hat\pi^2(b_t|s_t)}{\hat\pi^2(b_t|s_t)} 
    = \nabla_{\theta_1}Q^2_t(b_t) - \sum_b \hat{\pi}^2(b|s_t) \nabla_{\theta_1} Q^2_t(b).
\]
This concludes the derivation.

\subsection{Gradient of LOQA in Continuous Action Spaces}
\label{app:LOQA_continuous}

We derive the gradient of the opponent's policy \( \pi^2(b | s) \) with respect to the agent’s parameters \( \theta_1 \), assuming a continuous action space.

The opponent's policy is defined as:
\begin{equation}
    \pi^2(b | s) = \frac{\exp(Q^2(s, b))}{\int_{\mathcal{A}} \exp(Q^2(s, b')) \, db'},
\end{equation}
where \( Q^2(s, b) \) is the Q-value of the opponent for action \( b \), and \( \mathcal{A} \) is the continuous action space. 

Our goal is to compute the gradient of \( \pi^2(b | s) \) with respect to \( \theta_1 \), the parameters of agent 1, which affect \( Q^2(s, b) \) through interactions.

Taking the log of \( \pi^2(b | s) \), we get:
\begin{equation}
    \log \pi^2(b | s) = Q^2(s, b) - \log \left( \int_{\mathcal{A}} \exp(Q^2(s, b')) \, db' \right).
\end{equation}
The gradient of \( \log \pi^2(b | s) \) with respect to \( \theta_1 \) is:
\begin{equation}
    \nabla_{\theta_1} \log \pi^2(b | s) = \nabla_{\theta_1} Q^2(s, b) - \nabla_{\theta_1} \log \left( \int_{\mathcal{A}} \exp(Q^2(s, b')) \, db' \right).
\end{equation}

Next, we compute the gradient of the log partition function \( Z(s) = \int_{\mathcal{A}} \exp(Q^2(s, b')) \, db' \):
\begin{equation}
    \nabla_{\theta_1} \log Z(s) = \frac{\nabla_{\theta_1} Z(s)}{Z(s)} = \frac{1}{\int_{\mathcal{A}} \exp(Q^2(s, b')) \, db'} \int_{\mathcal{A}} \exp(Q^2(s, b')) \nabla_{\theta_1} Q^2(s, b') \, db',
\end{equation}
which simplifies to:
\begin{equation}
    \nabla_{\theta_1} \log Z(s) = \int_{\mathcal{A}} \pi^2(b' | s) \nabla_{\theta_1} Q^2(s, b') \, db'.
\end{equation}

Now, applying the chain rule to compute the gradient of \( \pi^2(b | s) \), we get:
\begin{equation}
    \nabla_{\theta_1} \pi^2(b | s) = \pi^2(b | s) \left( \nabla_{\theta_1} Q^2(s, b) - \int_{\mathcal{A}} \pi^2(b' | s) \nabla_{\theta_1} Q^2(s, b') \, db' \right).
\end{equation}

We are allowed to interchange the gradient and the integral by applying Leibniz's rule, which holds under the following conditions:
1. \( \exp(Q^2(s, b')) \) and its gradient \( \nabla_{\theta_1} \exp(Q^2(s, b')) \) are continuous, as both the exponential function and the Q-value function \( Q^2(s, b') \) are smooth.
2. The integral \( \int_{\mathcal{A}} \exp(Q^2(s, b')) \, db' \) converges due to the boundedness of \( Q^2(s, b') \) or a rapid decay over the action space.
3. We assume \( \nabla_{\theta_1} Q^2(s, b') \) is bounded, ensuring the interchange of the gradient and integral is well-defined. Thus, the final expression for the gradient of the opponent’s policy is:
\begin{equation}
    \nabla_{\theta_1} \pi^2(b | s) = \pi^2(b | s) \left( \nabla_{\theta_1} Q^2(s, b) - \int_{\mathcal{A}} \pi^2(b' | s) \nabla_{\theta_1} Q^2(s, b') \, db' \right).
\end{equation}
The Integral above is intractable, which makes continuous action LOQA hard to scale. 
\subsection{Proof of Theorem \ref{th:LOQA_as_AA}}
\label{app:LOQA}
\begin{proof}
In practice, LOQA deviates from the approach discussed in Appendix \ref{app:LOQA_grad}. Specifically, it does not differentiate through all of the Q-values, but only through that of the action $b_t$ actually observed in the sampled trajectory:
\begin{equation}
    \label{eq:TRUE-LOQA}
    \Tilde{\pi}^2(b_t|s_t) := \frac{\exp Q^2(s_t, a_t, b_t)}{\exp Q^2(s_t, a_t, b_t) + \sum_{b \neq b_t}\underbrace{\exp Q^2(s_t, a_t, b)}_{\text{non-differentiable}}}
\end{equation}
This choice is made because the trajectory provides an estimate of the Q-value of each opponent action $b_t$.
This estimate statistically depends on the agent's actions $a_{<t}$ and therefore can be stochastically differentiated w.r.t $\theta_1$ using REINFORCE.
The other Q-values will be estimated by function approximators, which depend only implicitly on $\theta_1$ and cannot be differentiated.

Differentiating \eqref{eq:TRUE-LOQA} leads to a simplified gradient:
\begin{align}
    \nabla_{\theta_1}\Tilde{\pi}^2(b_t|s_t) &= \nabla_{\theta_1}\left(   \frac{\exp Q^2(s_t, a_t, b_t)}{\exp Q^2(s_t, a_t, b_t) + \sum_{b \neq b_t}\exp Q^2(s_t, a_t, b)}\right)\\
    &= \nabla_{\theta_1} \exp Q^2_t(b_t)\frac{\left(\exp Q^2_t(b_t) + \sum_{b \neq b_t}\exp Q^2_t(b)\right) - \exp Q^2_t(b_t)}{\left(\exp Q^2_t(b_t) + \sum_{b \neq b_t}\exp Q^2_t(b)\right)^2} \\
    &= \exp Q^2_t(b_t) \nabla_{\theta_1} Q^2_t(b_t) \frac{\sum_{b \neq b_t}\exp Q^2_t(b) + \exp Q^2_t(b_t)- \exp Q^2_t(b_t)}{\left(\exp Q^2_t(b_t) + \sum_{b \neq b_t}\exp Q^2_t(b)\right)^2}\\
    &= \Tilde{\pi}^2(b_t|s_t) (1 - \Tilde{\pi}^2(b_t|s_t)) \nabla_{\theta_1} Q^2_t(b_t).
\end{align}
By the chain rule, the gradient of the log probability is
\begin{equation}\label{eq:LOQA_gradient_AA}
    \nabla_{\theta_1}\log \Tilde{\pi}^2(b_t|s_t)
    = \frac{\nabla_{\theta_1} \Tilde\pi^2(b_t|s_t)}{\Tilde\pi^2(b_t|s_t)} 
    = (1 - \Tilde{\pi}^2(b_t|s_t))\nabla_{\theta_1}Q^2_t(b_t).
\end{equation}
The difference between LOQA and Advantage Alignment lies in the extra scaling factor $(1 - \Tilde{\pi}^2(b_t|s_t))$ which accounts for the partition function. Plugging \eqref{eq:LOQA_gradient_AA} into the generalized policy gradient equation \eqref{eq:OPPONENT-SHAPING} proves the theorem.
\end{proof}


\subsection{Advantage Alignment Implementation}
\label{app:AA-imp}
To more clearly see the Advantage Alignment formula as an influence over each individual log probability term recall the formulation:
\begin{equation}
    \tag{\ref{eq:ADVANTAGE-ALIGNMENT}}
    \mathbb{E}_{\tau \sim \text{Pr}_{\mu}^{\pi^1, \pi^2}} \left[\sum_{t=0}^\infty \gamma^{t+1} \left(\sum_{k<t} \gamma^{t-k} A^{1}(s_k, a_k, b_k)\right)  A^2(s_t, a_t, b_t)\nabla_{\theta^1}\text{log } \pi^1(a_t|s_t) \right].
\end{equation}
The $\gamma^t$ term helps regularize the linear scaling of the sum of the advantages of the agent. Alternatively one could regularize using $1/(1+t)$ instead:
\begin{equation}
    \label{eq:PRACTICAL_AA}
    \mathbb{E}_{\tau \sim \text{Pr}_{\mu}^{\pi^1, \pi^2}} \left[\sum_{t=0}^\infty \frac{1}{1 + t} \left(\sum_{k<t} A^{1}(s_k, a_k, b_k)\right)  A^2(s_t, a_t, b_t)\nabla_{\theta^1}\text{log } \pi^1(a_t|s_t) \right].
\end{equation}
Which accounts to increasing the probability of the actions that align the sum of the past advantages of the agent up to the current time-step $t-1$ and the advantage of the opponent at the current time-step, $t$. In our implementation we use \eqref{eq:PRACTICAL_AA}, as it considers more terms in the future and works better in practice.

\subsection{Proximal Advantage Alignment}
\label{app:PPA}
\begin{algorithm}[t]
\caption{Proximal Advantage Alignment}
\label{algo:PAA}
\begin{algorithmic}
\STATE \textbf{Initialize:} Discount factor $\gamma$, agent Q-value parameters $\phi^1$, t Q-value parameters $\phi_{\text{t}}^1$, actor parameters $\theta^1$, opponent Q-value parameters $\phi^2$, t Q-value parameters $\phi_{\text{t}}^2$, actor parameters $\theta^2$
\FOR{iteration$=1, 2, \hdots$}
    \STATE Run policies $\pi^1$ and $\pi^2$ for $T$ timesteps in environment and collect trajectory $\tau$
    \STATE Compute agent critic loss $L_C^1$ using the TD error with $r^1$ and $V^1$
    \STATE Compute opponent critic loss $L_C^2$ using the TD error with $r^2$ and $V^2$
    \STATE Optimize $L_C^1$ w.r.t. $\phi^1$ and $L_C^2$ w.r.t. $\phi^2$
    with optimizer of choice
    \STATE Optimize $L_C^1$ w.r.t. $\phi^1$ and $L_C^2$ w.r.t. $\phi^2$
    with optimizer of choice
    \STATE Compute generalized advantage estimates $\{A_1^1, \hdots, A_T^1\}$, $\{A_1^2, \hdots, A_T^2\}$
    \STATE Compute agent actor loss, $L_a^1$, using \eqref{eq:PAA} 
    \STATE Compute opponent actor loss, $L_a^2$, using \eqref{eq:PAA}
    \STATE Optimize $L_a^1$ w.r.t. $\theta^1$ and $L_a^2$ w.r.t. $\theta^2$
    with optimizer of choice
\ENDFOR
\end{algorithmic}
\end{algorithm}
We can combine the two policy gradient terms into a single one to come up with a proximal Advantage Alignment formulation:
\begin{equation}
    \label{eq:AA}
    \mathbb{E}_{\tau \sim \text{Pr}_{\mu}^{\pi^1, \pi^2}} \left[\sum_{t=0}^\infty \gamma^{t}A^1_t\nabla_{\theta^1}\log \pi^1(a_t|s_t) + \beta\gamma 
   \sum_{t=0}^\infty \gamma^t \left(\sum_{k<t} \gamma^{t-k}A^{1}_k\right)  A^2_t\nabla_{\theta^1}\log \pi^1(a_t|s_t) \right]
\end{equation}
Where $\beta$ is the weight put into the Advantage Alignment loss (the negative inverse of the Boltzmann constant times the temperature). Then we have:
\begin{equation}
    \label{eq:AA_sum}
    \mathbb{E}_{\tau \sim \text{Pr}_{\mu}^{\pi^1, \pi^2}} \left[\sum_{t=0}^\infty \gamma^t \left(A^1_t + \beta \gamma \left(\sum_{k<t} \gamma^{t-k} A^{1}_k \right)A^2_t\right)\nabla_{\theta^1}\text{log } \pi^1(a_t|s_t) \right].
\end{equation}
This is just the normal advantage policy gradient with a modified advantage $A^*$:
\begin{equation}
    \label{eq:APG_AA}
    \mathbb{E}_{\tau \sim \text{Pr}_{\mu}^{\pi^1, \pi^2}} \left[\sum_{t=0}^\infty \gamma^t A^*_t \nabla_{\theta^1}\text{log } \pi^1(a_t|s_t) \right], \text{ where } A^*_t = A^1_t + \beta \gamma \left(\sum_{k<t} \gamma^{t-k} A^{1}_k \right)A^2_t.
\end{equation}
Recall the Trust Region Policy Optimization (TRPO) \citep{schulman2017trust} objective, we want to maximize the value function while maintaining the updated policy close in policy space: 
\begin{equation}
    \label{eq:TRPO}
    \begin{aligned}
        &\max_{\theta^1} V^{1}(\mu)\\
        &\text{s.t. } \sup_{s}\left\|\pi^1(\cdot|s) - \pi_n^1(\cdot|s)\right\|_{\text{tv}}\leq \delta
    \end{aligned}
\end{equation}
We can use the PPO \citep{schulman2017proximal} surrogate objective:
\begin{equation}
    \label{eq:PPO}
    \mathbb{E}_{\tau \sim \text{Pr}_{\mu}^{\pi^1, \pi^2}} \left[\min \left\{r_n(\theta_1)A^1_t, \;\text{clip}\left(r_n(\theta_1);1-\epsilon, 1+\epsilon \right)A^1_t \right \}\right]
\end{equation}
Now we apply it to the Advantage Alignment formulation that uses the modified advantage on the policy gradient \eqref{eq:APG_AA}:
\begin{equation}
    \tag{\ref{eq:PAA}}
    \mathbb{E}_{\tau \sim \text{Pr}_{\mu}^{\pi^1, \pi^2}} \left[\min \left\{r_n(\theta_1)A^*_t, \;\text{clip}\left(r_n(\theta_1);1-\epsilon, 1+\epsilon \right)A^*_t \right \}\right],
\end{equation}
where we denote $\pi^1_n(a_t|s_t)$ to be the updated policy and $r_n(\theta_1) = \pi^1_n(a_t|s_t)/\pi^1(a_t|s_t)$ is the ratio between the updated policy and the old policy. We used generalized advantage estimation (GAE) \citep{schulman2018highdimensional}
to compute the advantages in this expression. Algorithm \ref{algo:PAA} summarizes the implementation of Proximal Advantage Alignment.

\subsection{Proof of Theorem \ref{th:AA_Nash}}
\label{app:AA_preserves_NE}

Let $\theta_1, ... , \theta_n$ be the parameter each agent, $\pi_{\theta_i}(a|s)$ be the policies represented by those parameters, and $V_i(\theta_1, ... , \theta_n)$ be the value function of agent $i$ as a function of all the other agents.
\\
\begin{lemma}[Zero Advantages At Nash]
\label{lm:zero_advantages}
For all Nash Equilibria of the game, if there exist parameters $\theta_1^*, ... , \theta_n^*$ such that $\pi_{\theta_i^*} = \pi_i^*$, where $\pi_i^*$ is the policy of agent $i$ at the Nash, then for all action-state pairs with non-zero probability under the Nash policies, we have $A_i(a|s) = 0$.
\end{lemma}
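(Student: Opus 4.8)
The plan is to reduce the multi-agent Nash condition to a single-agent optimality statement for each player, and then invoke the standard characterization of optimal policies in an MDP. Fix a player $i$ and hold the other players' policies at their Nash values $\pi_{-i}^*$. By definition of a Nash equilibrium, $\pi_i^*$ is a best response to $\pi_{-i}^*$, i.e. it maximizes $V_i(\theta_1^*, \dots, \theta_n^*)$ over all policies of player $i$. With the opponents frozen, player $i$ faces an ordinary single-agent MDP whose value and action-value functions are exactly the $V_i$ and $Q_i$ appearing in $A_i(a \mid s) = Q_i(a \mid s) - V_i(s)$, so it suffices to show that an optimal policy of this MDP has zero advantage on every state-action pair it visits with positive probability.

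The first ingredient I would record is the elementary identity $\sum_a \pi_i^*(a \mid s) A_i(a \mid s) = 0$, valid at every state, which follows immediately from $V_i(s) = \sum_a \pi_i^*(a \mid s) Q_i(a \mid s)$. Thus the support advantages at any fixed $s$ form a set of numbers with policy-weighted mean zero; to force them all to vanish it is enough to show they are all equal (equivalently, that none is strictly positive). The second, and central, ingredient is a local optimality argument at each reachable state. Let $s$ satisfy $d_\gamma(s) > 0$ under the Nash policies, and let $a_1, a_2$ be any two actions in the support of $\pi_i^*(\cdot \mid s)$. Consider moving an infinitesimal amount of probability mass from $a_1$ to $a_2$ at state $s$; by the policy gradient theorem (equivalently, the performance difference lemma), the first-order change in $V_i$ is proportional to $d_\gamma(s)\,(A_i(a_2 \mid s) - A_i(a_1 \mid s))$. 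Since $\pi_i^*$ is a best response, no admissible perturbation can increase $V_i$, and because mass can be shifted in either direction between two support actions, the derivative must vanish, giving $A_i(a_1 \mid s) = A_i(a_2 \mid s)$. As this holds for every pair of support actions, all support advantages at $s$ are equal, and combined with the mean-zero identity they are all zero. Equivalently, one may argue by policy improvement: a support action with strictly positive advantage could be boosted to strictly increase the return, contradicting the best-response property.

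The step I expect to require the most care is this local optimality argument. One must make precise that the Nash best-response property is taken over a policy class rich enough to realize the one-state mass-shift perturbation; the parameterization hypothesis $\pi_{\theta_i^*} = \pi_i^*$, together with an expressive family such as tabular softmax, guarantees this. One must also keep the conclusion restricted to states with $d_\gamma(s) > 0$, since at unreachable states the policy is unconstrained and the advantage need not vanish. Handling the reachability bookkeeping through the occupancy measure $d_\gamma$, and confirming that freezing the opponents yields a well-defined stationary MDP so that the standard policy-gradient and policy-improvement results apply, are the remaining routine but necessary checks.
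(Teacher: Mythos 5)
Your proposal is correct, and its skeleton matches the paper's: fix player $i$, freeze the opponents at their Nash policies, observe that the Nash best-response property makes $\pi_i^*$ an optimal policy of the induced single-agent MDP, and deduce that advantages vanish on the support. Where you diverge is in the final step. The paper's proof is two lines: it invokes the Bellman optimality equation, $V_i^*(s) = \max_a Q_i^*(s,a)$ (the paper misprints this as an $\arg\max$), so every action in the support of $\pi_i^*(\cdot|s)$ must attain the maximal $Q$-value, and $A_i = Q_i - V_i = 0$ follows immediately. You instead run a variational argument: the policy-weighted mean of the advantages is zero at every state, and a first-order mass-shift perturbation between two support actions at a reachable state changes $V_i$ proportionally to $d_\gamma(s)\bigl(A_i(a_2|s) - A_i(a_1|s)\bigr)$, which must vanish at a best response; equal support advantages with zero mean are all zero. (Your parenthetical policy-improvement alternative is essentially the paper's route.) What the two approaches buy: the paper's is shorter and rests on a single textbook fact, but it silently skips the reduction to the induced MDP and never addresses reachability; your version makes the reduction explicit, correctly restricts the conclusion to states with $d_\gamma(s) > 0$ (a best response constrains nothing at unreachable states, so this caveat is genuinely needed for the lemma as stated, which quantifies over state-action pairs of positive probability), and flags the expressiveness requirement on the policy class hidden in the hypothesis $\pi_{\theta_i^*} = \pi_i^*$. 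Your stationarity argument also connects more directly to the downstream use of the lemma in Theorem \ref{th:AA_Nash}, which is itself a statement about policy-gradient terms vanishing.
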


\begin{proof}
By the Bellman Optimality Equation, at an optimal policy the value of agent $i$ becomes $V_i^*(s) = \arg \max_a Q^*(a,s)$, hence all actions with non-zero probability under $\pi_i^*$ have the same $Q^*(a,s)$, and since $A(a,s) \equiv Q(a,s)-V(s)$, the advantage will vanish. 
\end{proof}

We now use lemma \ref{lm:zero_advantages} to prove that the Advantage Alignment term is zero at a Nash equilibrium.

\begin{proof}
Under Advantage Alignment, the updates we take can be represented by
\begin{align}
    \theta'_i &\leftarrow \theta_i + \alpha \cdot \mathbb{E}_{\tau \sim \text{Pr}_{\mu}^{\pi^i, \pi^{-i}}} \left[\sum_{t=0}^\infty B^i_t \: \nabla_{\theta_i} \log \pi_{\theta_i}(a_t | s_t)\right]\\
    B^i_t &\equiv A^i_t + \beta \cdot \mathbb{E}_{\tau \sim \text{Pr}_{\mu}^{\pi^i, \pi^{-i}}} \left[\sum_{j\neq i}  \left(\sum_{k<t} \gamma^{t-k} A^{i}_k \right)A^j_t \right]
\end{align}
But by lemma \ref{lm:zero_advantages}, $A^j(b_{t}|s_{t}) = 0$ for all actions at a Nash, hence the second term vanishes, as does the first term for the same reason. 
\end{proof}

\subsection{N-Player Advantage Alignment}
Consider the $n$-player setup, we can use the policy gradient theorem and assumption \ref{as:consistency} to derive the following expression:
\begin{equation}
    \nabla_{\theta_1}V^{i}(\mu) = \mathbb{E}_{\tau \sim \text{Pr}_{\mu}^{\pi^i, \pi^{-i}}} \left[\sum_{t=0}^\infty \gamma^t A^i_t \left(\nabla_{\theta_i} \log \pi^i (a^i_t | s_t) + \sum_{j \neq i}\nabla_{\theta_i} \log \pi^j (a^j_t | s_t)\right)\right]
\end{equation}
Which naturally leads to the following modification to the PPO advantage following the derivation used in Proximal Advantage Alignment:
\begin{equation}
    {A^i_t}^* = A^i_t + \beta \gamma \left(\sum_{k<t} \gamma^{t-k} A^{i}_k \right) \sum_{j \neq i} A^j_t
\end{equation}
Here we use the standard game theory notation of $i$ to refer to the current player and $-i$ to refer to all other players. Similarly $a^i_t$ denotes the action of player $i$ at time $t$.
\newpage

\section{Experimental Details}
\label{app:Experiments}

\subsection{Iterated Prisoner's Dilemma}
\label{app:IPD}
We use an MLP layer connected to a GRU followed by another MLP head for both the actor and critic networks, similar to the architecture used in POLA \citep{zhao2022proximal}. We also use a replay buffer of agents collected during training, following \cite{aghajohari2024loqa}. All of our IPD experiments run in 50 minutes in a nvidia A100 gpu.

\begin{table}[h]
\centering
\caption{IPD Hyperparameters}
\label{tab:ipd_config}
\begin{tabular}{ll}
\textbf{Parameter} & \textbf{Value} \\ \hline
Actor Training Optimizer & Adam \\
Actor Training Entropy Beta & 0.15 \\
Actor Training Learning Rate (Actor Loss) & 0.0001 \\
Advantage Alignment Weight & 0.3 \\
Actor Hidden Size & 64 \\
Layers Before GRU & 1 \\
Q-Value Training Optimizer & Adam \\
Q-Value Training Learning Rate & 0.001 \\
Q-Value Training Target EMA Gamma & 0.99 \\
Q-Value Hidden Size & 64 \\
Batch Size & 2048 \\
Self-Play & True \\
Reward Discount Factor & 0.9 \\
Agent Replay Buffer Capacity & 10000 \\
Agent Replay Buffer Update Frequency & 1 \\
Agent Replay Buffer Current Agent Fraction & 0 \\
Advantage Alignment Discount Factor & 0.9 \\
\end{tabular}
\end{table}

\subsection{Coin Game}
\label{app:CG}
We use the same architecture used for IPD with an MLP connected to a GRU unit, followed by another MLP. We experimented with both Advantage Alignment (Equation \eqref{eq:ADVANTAGE-ALIGNMENT}) and Proximal Advantage Alignment (Equation \eqref{eq:PAA}), with Advantage Alignment performing better (this is the one we used). All of our Coin Game experiments run in 30 minutes in a nvidia A100 gpu.

\begin{table}[h]
\centering
\caption{Coin Game Hyperparameters}
\label{tab:cg_config}
\begin{tabular}{ll}
\textbf{Parameter} & \textbf{Value} \\ \hline

Actor Training Optimizer & Adam \\
Actor Training Entropy Beta & 0.1 \\
Actor Training Learning Rate (Actor Loss) & 0.002 \\
Advantage Alignment Weight & 0.25 \\
Actor Hidden Size & 64 \\
Layers Before GRU & 1 \\
Q-Value Training Optimizer & Adam \\
Q-Value Training Learning Rate & 0.005 \\
Q-Value Training Target EMA Gamma & 0.99 \\
Q-Value Hidden Size & 64 \\
Batch Size & 512 \\
Self-Play & True \\
Reward Discount Factor & 0.96 \\
Agent Replay Buffer Capacity & 10000 \\
Agent Replay Buffer Update Frequency & 10 \\
Agent Replay Buffer Current Agent Fraction & 0 \\
Advantage Alignment Discount Factor & 0.9 \\
\end{tabular}
\end{table}

\subsection{Negotiation Game}
\label{app:NG}

We experimented with both Advantage Alignment (Equation \eqref{eq:ADVANTAGE-ALIGNMENT}) and Proximal Advantage Alignment (Equation \eqref{eq:PAA}), with  the original Advantage Alignment performing better.

\textbf{Agent's Architecture}: 
The game observations are a concatenation of the availability of the items, agent's value for each item, opponent's value for each item, and previous round proposals. This makes up for an input vector of length $15$. The previous round proposals are especially important as the agents need to examine whether the opponent defected against them by proposing high proposals for item in which the value of the item is higher for the agent compared to the value of the item to the opponent. In other words, if the opponent wanted to gain a little return in exchange of huge loss to the agent, defecting.

\textbf{Encoder:}The observation is then processed by an encoder. The encoder is a GRU network. The GRU network consists of first two Linear Layers with a relu non-linearity in between. Then it is passed to a GRU unit. 

\textbf{Critic:} The output of the GRU is then fed to a two-layer MLP with relu non-linearities for the critic module of the agent. Additionally, we concatenate the output of the encoder with the time, the index of the step of the game, for the value function as otherwise it would be hard to estimate the value of the state without knowing how long the game is going to go on for.

\textbf{Actor:} The actor is the most complex component as it deals with continuous actions. The output of the encoder is passed to an MLP with relu non-linearities and the output of the MLP is passed to a $\tanh$ activation and scaled by $2.5$, the output of this MLP is used as the mean of a normal distribution. The logarithm of the standard deviation is modeled by a single global parameter in the actor. Next, a sample of this normal distribution is passed through a $\tanh$ activation and scaled and shifted back to $(0, 5)$. Computing the log probability of this transformations requires careful implementation. Especially if the $\operatorname{atanh}$ operation that is used is numerically unstable. Please refer to the code released with this paper for the exact implementation. 

\textbf{Hyperparameters:}
Please refer to \ref{tab:negotiation_game} for the hyperparameters used in our negotiation game experiments. We use a replay buffer on our gather trajectories although the rate that it is mixed with fresh trajectories is small. 

\begin{table}[h]
\centering
\caption{Negotiation Game Hyperparameters}
\label{tab:negotiation_game}
\begin{tabular}{ll}
\textbf{Parameter} & \textbf{Value} \\ \hline
Actor Training Optimizer & Adam \\
Trajectory Length & 50 \\
Encoder Layers & 2 \\
MLP Model Layers & 2 \\
Replay Buffer Size & 100000 \\
Replay Buffer Update Size & 500 \\
Replay Buffer Off-policy Ratio & 0.05 \\
Q-Value Training Optimizer & Adam \\
Optimizer (Actor) Learning Rate & 0.001 \\
Optimizer (Critic) Learning Rate & 0.001 \\
Entropy Beta & 0.005 \\
Advantage Alignment Weight & 3.0 \\
Self-Play & True \\
Batch Size & 16384 \\
Gradient Clipping Norm & 1.0 \\
\end{tabular}
\end{table}

Note that the optimization of the agents in the negotiation game is unstable, preventing us from taking the last checkpoint. In our experiments in Fig \ref{fig:negotiation_league_a} we select the checkpoint that corresponds to the best achieved return for the agent during the optimization of the agent and the opponent. While we are not completely certain, we observe the instability happens when the policy distribution concentrates around the maximum possible proposal which is $5$. We clipped the $\operatorname{atanh}$ operation in our implementation for more numerical stability. All of our Negotiation Game experiments run in 1 hour on a nvidia A100 gpu.

\subsection{Melting Pot's Commons Harvest Open}
\label{app:CH}

We experimented with both Advantage Alignment (Equation \eqref{eq:ADVANTAGE-ALIGNMENT}) and Proximal Advantage Alignment (Equation \eqref{eq:PAA}), with Proximal Advantage Alignment performing better.

\textbf{Agent's Architecture}: In the Commons Harvest Open environment, agents receive observations consisting of a local view of the environment in the form of raw pixel data. Each observation is an image frame capturing the agent's immediate surroundings. We use a 3 layer convolutional neural network, following \citep{mnih2013playing}, to encode the observations, which are then passed to a GTrXL tranformer \citep{parisotto2019stabilizing}.

\textbf{Encoder:} The observation frames are processed by an encoder. The encoder is a GTrXL transformer network \citep{parisotto2019stabilizing}. The GTrXL network consists of 3 transformer layers, each with a model dimension of 192 and a feedforward dimension of 192. The transformer is capable of handling sequences up to a maximum length of 1000 steps, capturing temporal dependencies in the agents' observations. In practice, we use a context length of 15.

\textbf{Critic:} The output of the encoder is then fed to a two-layer Multi-Layer Perceptron (MLP) with ReLU non-linearities for the critic module of the agent. To provide temporal context, we concatenate the current time step to the encoder's output before feeding it to the critic. This helps the critic estimate the value of the state more accurately, as the remaining time in an episode can affect the expected return.

\textbf{Actor:} The actor network shares the encoder with the critic. The output of the encoder is passed through another MLP with ReLU non-linearities to produce logits over discrete action choices. The policy is modeled as a categorical distribution over these actions, which include turning around, moving in different directions, and zapping other agents.

\textbf{Hyperparameters:} Please refer to Table \ref{tab:commons_harvest} for the hyperparameters used in our Commons Harvest Open experiments.

\begin{table}[h]
\centering
\caption{Commons Harvest Open Hyperparameters}
\label{tab:commons_harvest}
\begin{tabular}{ll}
\textbf{Parameter} & \textbf{Value} \\ \hline
Self-Play & True \\
Batch Size & 512 \\
Optimizer (Actor) Learning Rate & $1 \times 10^{-5}$ \\
Optimizer (Critic) Learning Rate & $1 \times 10^{-5}$ \\
Entropy Beta & 0.1 \\
Advantage Alignment Weight & 1.0 \\
Clip Gradient Norm & 10.0 \\
Transformer Layers & 3 \\
Transformer Model Dimension & 192 \\
Transformer Feedforward Dimension & 192 \\
Discount Factor ($\gamma$) & 0.99 \\
PPO Clip Range & 0.1 \\
PPO Updates per Batch & 2 \\
Normalize Advantages & True \\
Context Length & 15 \\
\end{tabular}
\end{table}

We use a parallelized environment with 6 copies of Commons Harvest Open to make training more efficient. Following LOQA \citep{aghajohari2024loqa}, we keep a replay buffer of past agent parameters to ensure robustness against a distribution of policies. From this replay buffer we sample 2 agents at each iteration and play against 5 \textit{self-play} agents with the current version of the policy. For each environment, we use the 5 \textit{on-policy} trajectories to compute losses for the actor and critic. In total, our Commons Harvest Open experiments last 24 hours on an nvidia L40s gpu. 

\newpage
\section{Additional Figures}

\subsection{Negotiation Game Training Curves}

Figure \ref{app:neg_training_curves} shows the training curves of Advantage Alignment on 10 seeds.
\begin{figure}[h] 
  \centering
  \includegraphics[width=0.8\linewidth]{ 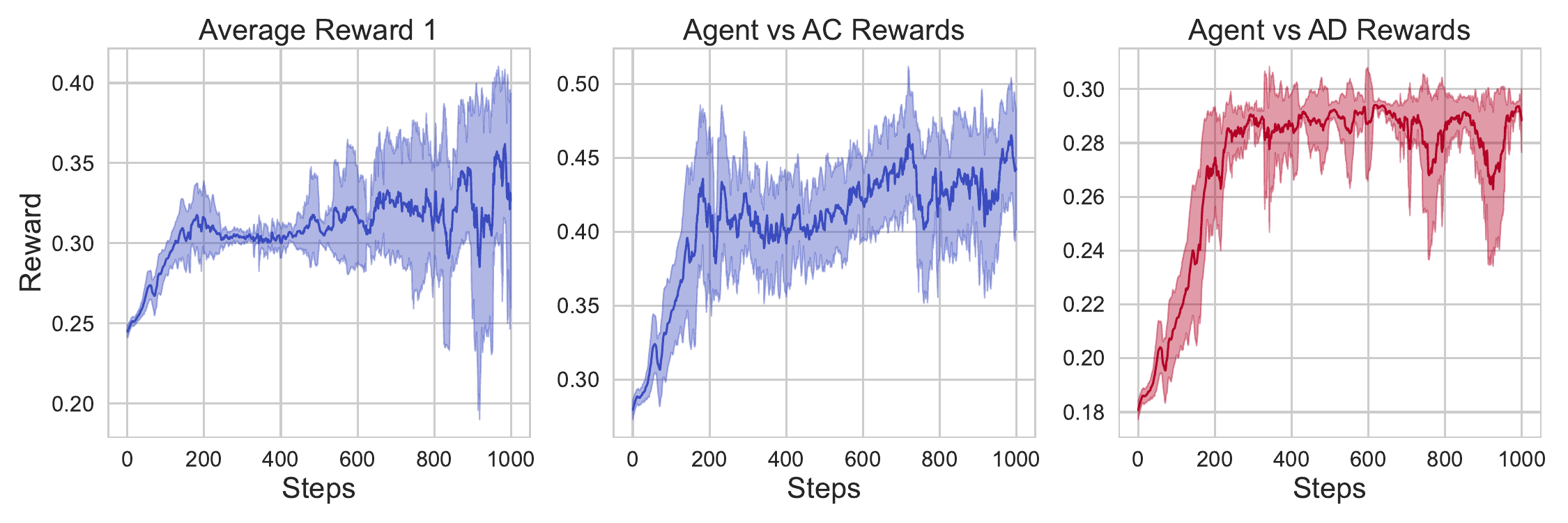}
  \caption{Training curves of Advantage Alignment averaged over 10 seeds.}
  \label{app:neg_training_curves}
\end{figure}

\subsection{Coin Game Full League Results}

Figure \ref{app:full_league_coin_adaling} shows the head-to-head results of all agents we experimented with in a league.
\begin{figure}[h] 
  \centering
  \includegraphics[width=0.8\linewidth]{ 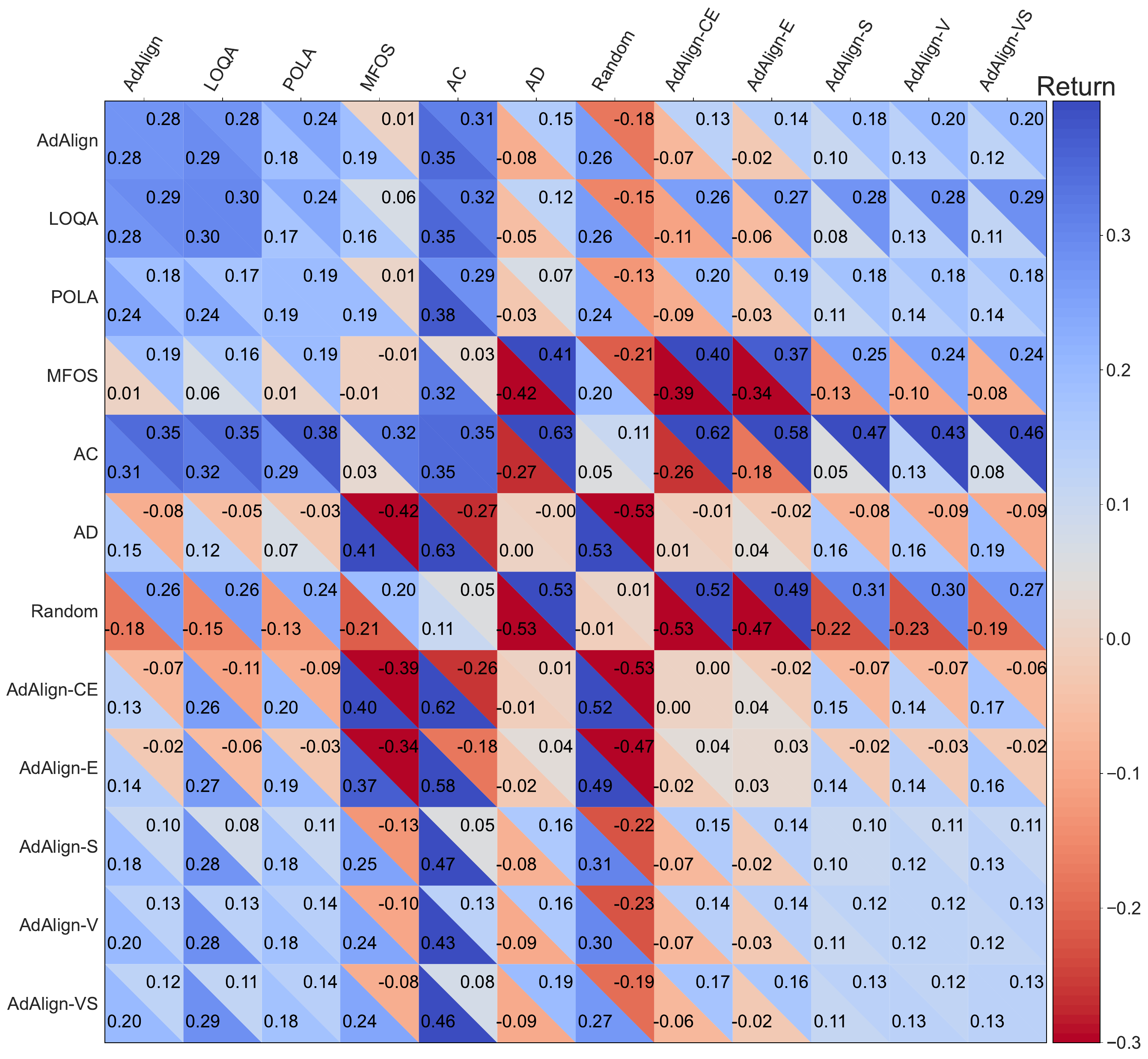}
  \caption{The head-to-head results of all variants of the coin game agents experimented with in this paper. All numbers are an average of 10 seeds of one type of agent with 10 seeds of another type of agent, where each pair play 32 games. We ablate Advantage Alignment masking different components of the gradient. Cooperative (C), masks when both advantages are positive; Empathetic (E), masks when the advantage of the agent is positive and the advantage of the opponent is negative; Vengeful (V), masks when the advantage of the agent is negative and the advantage of the opponent is positive;  Spiteful (S), masks when both advantages are negative.}
  \label{app:full_league_coin_adaling}
\end{figure}

\subsection{Ablation Study Commons Harvest Open}

\begin{figure}[h]
  \centering
  \begin{subfigure}{1\textwidth}
    \centering
    \includegraphics[width=\linewidth]{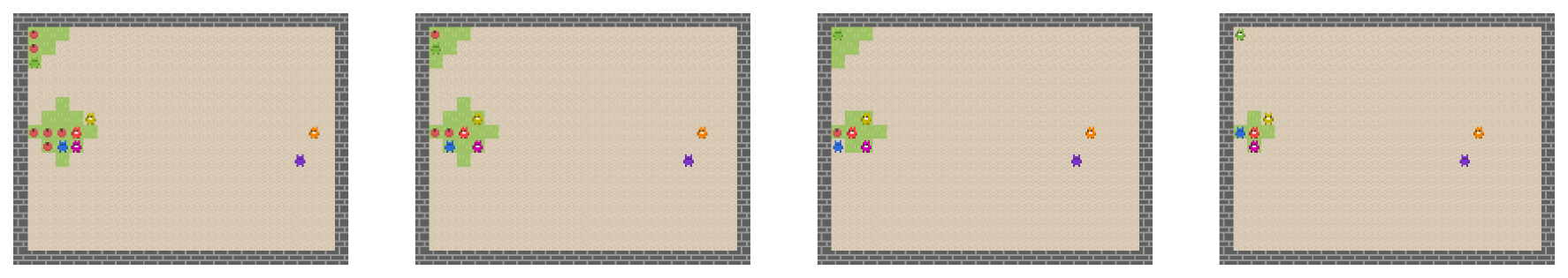}
    \label{fig:consumption}
  \end{subfigure}%
  \vspace{-5mm}
  \begin{subfigure}{1\textwidth}
    \centering
    \includegraphics[width=\linewidth]{ 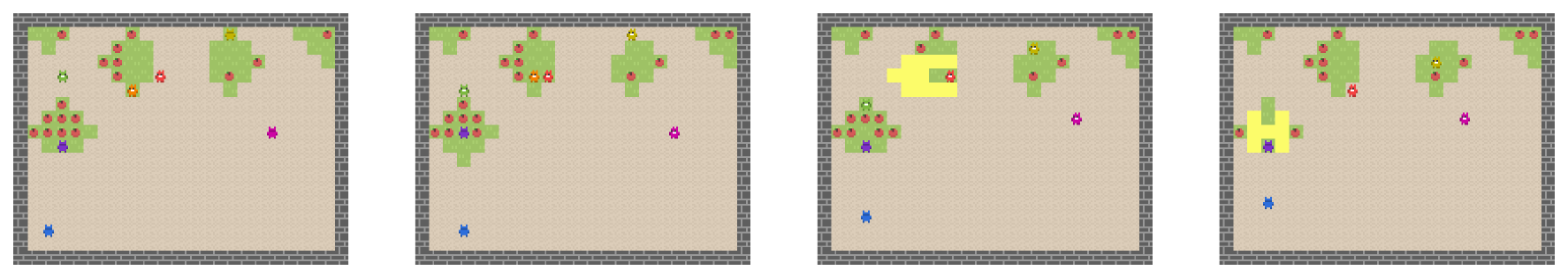}
    \label{fig:bush_guarding}
  \end{subfigure}%
  \vspace{-5mm}
  \begin{subfigure}{1\textwidth}
    \centering
    \includegraphics[width=\linewidth]{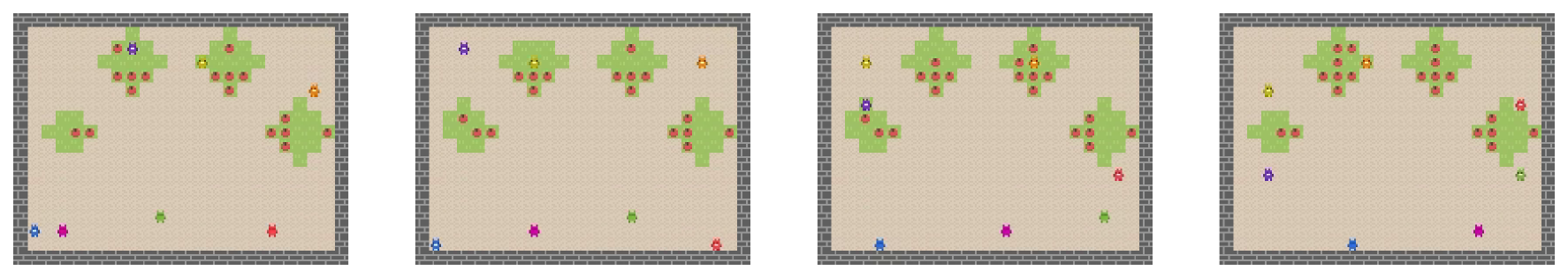}
    \label{fig:rotating}
  \end{subfigure}
  \caption{Sample trajectories for Proximal Advantage Alignment agents with different $\beta$ weight. We select the best of 10 seeds for each value of $\beta$. \textbf{On the first row:} $\beta = 0.5$, agents reach a policy where they try to consume the apples as fast as possible. \textbf{On the second row:} $\beta = 1$, agents reach a "bush guarding" policy, zapping any other agents coming into the same bush.  \textbf{On the third row:} $\beta = 2$, agents reach a policy where they rotate around specific paths, preventing the extinction of the bushes.}
  \label{fig:Melting Pot_trajectories}
\end{figure}

Interestingly the value of $\beta$, which is used to control the weight of the advantage alignment term in \eqref{eq:INTEGRATED_ADVANTAGE}, leads agents to converge to different policies. With a low value of the weight ($\beta = 0.5$), we empirically observed that most runs converge to a greedy policy that attempts to consume apples as soon as possible. With a value of $\beta = 1$, we find policies that show a "bush guarding" behavior preventing other agents from approaching their bush, and consuming apples within that bush with moderate restraint. This is the policy that shows the best evaluation performance in figure \ref{fig:Melting Pot_normalized}. With high values of the weight ($\beta = 2$), most runs find a rotating strategy in which agents stick to eating only a subset of the apples on each bush. This policy has the highest \textit{pro-social} return out of all of them. However, the rotating strategy is also vulnerable to exploitation from greedy agents and does poorly in the evaluation scenarios. Figure \ref{fig:Melting Pot_trajectories}  showcases what these policies look like in practice. 

\end{document}